\documentclass[12pt]{colt2020} 


\title[Loss Functions, Axioms, and Peer Review]{Loss Functions, Axioms, and Peer Review}
\usepackage{times}

\hypersetup{
    colorlinks = true,
 	linkcolor = teal,
 	citecolor = teal
}

\usepackage{tikz}
\usepackage{caption}

\newtheorem{claim}{Claim}

\renewcommand{\vec}{\mathbf}
\newcommand{\crit}{{\vec x}}
\renewcommand{\over}{y}
\newcommand{\aggr}{\widehat{f}}

\newcommand{\domEach}{\mathcal{X}}
\newcommand{\domY}{\mathbb{Y}}
\newcommand{\domX}{\mathbb{X}}
\newcommand{\fclass}{\mathcal{F}}
\newcommand{\truef}{g^\star}
\newcommand{\Lpq}[2]{$L(#1,#2)$}
\newcommand{\optOver}{y_{\optfn}}
\newcommand{\allP}{\mathcal{P}}
\newcommand{\allR}{\mathcal{R}}
\newcommand{\normDom}{[1,\infty]}
\newcommand{\infopt}{\widehat{v}}

\allowdisplaybreaks

\coltauthor{%
 \Name{Ritesh Noothigattu} \Email{riteshn@cmu.edu}\\
 \addr School of Computer Science, Carnegie Mellon University
 \AND
 \Name{Nihar B. Shah} \Email{nihars@cs.cmu.edu}\\
 \addr School of Computer Science, Carnegie Mellon University
 \AND
 \Name{Ariel D. Procaccia} \Email{arielpro@seas.harvard.edu}\\
 \addr School of Engineering and Applied Sciences, Harvard University
}

\begin{document}

\maketitle

\begin{abstract}
It is common to see a handful of reviewers reject a highly novel paper, because they view, say, extensive experiments as far more important than novelty, whereas the community as a whole would have embraced the paper. More generally, the disparate mapping of criteria scores to final recommendations by different reviewers is a major source of inconsistency in peer review. In this paper we present a framework inspired by empirical risk minimization (ERM) for learning the community's aggregate mapping. The key challenge that arises is the specification of a loss function for ERM. We consider the class of $L(p,q)$ loss functions, which is a matrix-extension of the standard class of $L_p$ losses on vectors; here the choice of the loss function amounts to choosing the hyperparameters $p, q \in [1,\infty]$. To deal with the absence of ground truth in our problem, we instead draw on computational social choice to identify desirable values of the hyperparameters $p$ and $q$. Specifically, we characterize $p=q=1$ as the only choice of these hyperparameters that satisfies three natural axiomatic properties. Finally, we implement and apply our approach to reviews from IJCAI 2017.  
\end{abstract}

\section{Introduction}

The essence of science is the search for objective truth, yet scientific work is typically evaluated through peer review\footnote{Even papers about peer review are subject to peer review, the irony of which has not escaped us.}\,---\,a notoriously subjective process~\citep{church2005reviewing,lamont2009professors,bakanic1987manuscript,hojat2003impartial,mahoney1977publication,kerr1977manuscript}. One prominent source of subjectivity is the disparity across reviewers in terms of their emphasis on the various criteria used for the overall evaluation of a paper. \citet{Lee15} refers to this disparity as \emph{commensuration bias}, and describes it as follows:
\begin{quote}
\emph{``In peer review, reviewers, editors, and grant program officers must make interpretive decisions about how to weight the relative importance of qualitatively different peer review criteria\,---\,such as novelty, significance, and methodological soundness\,---\,in their assessments of a submission's final/overall value. Not all peer review criteria get equal weight; further, weightings can vary across reviewers and contexts even when reviewers are given identical instructions.''}
\end{quote}
\citet{Lee15} further argues that commensuration bias ``illuminates how intellectual priorities in individual peer review judgments can collectively subvert the attainment of community-wide goals'' and that it ``permits and facilitates problematic patterns of publication and funding in science.'' There have been, however, very few attempts to address this problem.

A fascinating exception, which serves as a case in point, is the 27th AAAI Conference on Artificial Intelligence (AAAI 2013). Reviewers were asked to score papers, on a scale of 1--6, according to the following criteria: technical quality, experimental analysis, formal analysis, clarity/presentation, novelty of the question, novelty of the solution, breadth of interest, and potential impact. The admirable goal of the program chairs was to select ``exciting but imperfect papers'' over ``safe but solid'' papers, and, to this end, they provided detailed instructions on how to map the foregoing criteria to an overall recommendation. For example, the preimage of `strong accept' is ``a 5 or 6 in some category, no 1 in any category,'' that is, reviewers were instructed to strongly accept a paper that has a 5 or 6 in, say, clarity, but is below average according to each and every other criterion (i.e., a clearly boring paper).
It turns out that the handcrafted mapping did not work well, and many of the reviewers chose to not follow these instructions.
Indeed, handcrafting such a mapping requires specifying an 8-dimensional function, which is quite a non-trivial task. Consequently, in this paper we do away with a manual handcrafting approach to this problem.

Instead, we propose a data-driven approach based on machine learning, designed to learn a mapping from criteria scores to recommendations capturing the opinion of the entire (reviewer) community. From a machine learning perspective, the examples are reviews, each consisting of criteria scores (the input point) and an overall recommendation (the label). We make the innocuous assumption that each reviewer has a \emph{monotonic} mapping in mind, in the sense that a paper whose scores are at least as high as those of another paper on every criterion would receive an overall recommendation that is at least as high; the reviews submitted by a particular reviewer can be seen as observations of that mapping. Given this data, our goal is to learn a \emph{single monotonic mapping} that minimizes a loss function (which we discuss momentarily).
We can then apply this mapping to the criteria scores associated with each review to obtain new overall recommendations, which replace the original ones. 

Our approach to learn this mapping is inspired by empirical risk minimization (ERM). In more detail, for some loss function, our approach is to find a mapping that, among all monotonic mappings from criteria scores to the overall scores, minimizes the loss between its outputs and the overall scores given by reviewers across all reviews. However, the choice of loss function may significantly affect the final outcome, so that choice is a key issue.

Specifically, we focus on the family of $L(p,q)$ loss functions, with hyperparameters $p,q\in \normDom$, which is a matrix-extension of the more popular family of $L_p$ losses on vectors. Our question, then, is: 
\begin{quote}
What values of the hyperparameters $p\in \normDom$ and $q\in \normDom$ in the specification of the \Lpq{p}{q} loss function should be used? 
\end{quote}

A challenge we must address is the absence of any ground truth in peer review. To this end, take the perspective of  \emph{computational social choice}~\citep{BCEL+16}, since our framework aggregates individual opinions over mappings into a consensus mapping. From this viewpoint, it is natural to select the loss function so that the resulting aggregation method satisfies socially desirable properties, such as  \emph{consensus} (if all reviewers agree then the aggregate mapping should coincide with their recommendations),  \emph{efficiency} (if one paper dominates another then its overall recommendation should be at least as high), and \emph{strategyproofness} (reviewers cannot pull the aggregate  mapping closer to their own recommendations by misreporting them).

With this background, the main contributions of this paper are as follows. We first provide a principled framework for addressing the issue of subjectivity regarding the various criteria in peer review. 

Our main theoretical result is a characterization theorem that gives a decisive answer to the question of choosing the loss function for ERM: the three aforementioned properties are satisfied \emph{if and only if} the hyperparameters are set as $p=q=1$. This result singles out an instantiation of our approach that we view as particularly attractive and well grounded. 

We also provide empirical results, which analyze properties of our approach when applied to a dataset of $9197$ reviews from IJCAI 2017. One vignette is that the papers selected by \Lpq{1}{1} aggregation have a $79.2\%$ overlap with actual list of accepted papers, suggesting that our approach makes a significant difference compared to the status quo (arguably for the better).

Finally, we note that the approach taken in this paper may find other applications. Indeed, the problem of selecting a loss function is ubiquitous in machine learning~\citep{RVCP+04,MV08,MBM18}, and the axiomatic approach provides a novel way of addressing it. Going beyond loss functions, machine learning researchers frequently face the difficulty of picking an appropriate hypothesis class or values for certain hyperparameters.\footnote{Popular techniques such as cross-validation for choosing hyperparameters also in turn depend on specification of a loss function.} Thus, in problem settings where such choices must be made\,---\,particularly in emerging applications of machine learning (such as peer review)\,---\,the use of natural axioms can help guide these choices.

\section{Our Framework}
\label{sec:frame}

Suppose there are $n$ reviewers $\allR=\{1,2, \dots, n\}$, and a set $\allP$ of $m$ papers, denoted using letters such as $a,b,c$. Each reviewer $i$ reviews a subset of papers, denoted by $P(i) \subseteq \allP$. Conversely, let $R(a)$ denote the set of all reviewers who review paper $a$. Each reviewer assigns scores to each of their papers on $d$ different criteria, such as novelty, experimental analysis, and technical quality, and also gives an overall recommendation. We denote the \emph{criteria scores} given by reviewer $i$ to paper $a$ by $\crit_{ia}$, and the corresponding \emph{overall recommendation} by $\over_{ia}$. Let $\domEach_1, \domEach_2, \dots, \domEach_d$ denote the domains of the $d$ criteria scores, and let $\domX = \domEach_1 \times \domEach_2 \times \dots \times \domEach_d$. Also, let $\domY$ denote the domain of the overall recommendations. For concreteness, we assume that each $\domEach_k$ as well as $\domY$ is the real line. However, our results hold more generally, even if these domains are non-singleton intervals in $\mathbb{R}$, for instance.

We further assume that each reviewer has a monotonic function in mind that they use to compute the overall recommendation for a paper from its criteria scores. By a monotonic function, we mean that given any two score vectors $\crit$ and $\crit'$, if $\crit$ is greater than or equal to $\crit'$ on all coordinates, then the function's value on $\crit$ must be at least as high as its value on $\crit'$. Formally, for each reviewer $i$, there exists $\truef_i \in \fclass$ such that $\over_{ia} = \truef_i(\crit_{ia})$ for all $a \in P(i)$, where
{\small
\[\fclass = \{f:\domX \to \domY \ | \ \forall \crit, \crit' \in \domX, \crit \geq \crit' \Rightarrow f(\crit) \geq f(\crit')\}\]
}
is the set of all monotonic functions.

\subsection{Loss Functions}

Recall that our goal is to use all criteria scores, and their corresponding overall recommendations, to learn an aggregate function $\aggr$ that captures the opinions of all reviewers on how criteria scores should be mapped to recommendations. Inspired by empirical risk minimization, we do this by computing the function in $\fclass$ that minimizes the \Lpq{p}{q} loss on the data. In more detail, given hyperparameters $p,q \in \normDom$, we compute
\begin{align}   \label{eqn:aggregation}
\aggr \in \underset{f \in \fclass}{\text{argmin}}\ \left\{ \sum_{i\in \allR} \left[ \sum_{a \in P(i)} \left|y_{ia} - f(\vec{x}_{ia})\right|^p \right]^{\frac{q}{p}} \right\}^{\frac{1}{q}}.
\end{align}

In words, for a function $f$, the \Lpq{p}{q} loss is the $L_q$ norm taken over the loss associated with individual reviewers, where the latter loss is defined as the $L_p$ norm computed on the error of $f$ with respect to the reviewer's overall recommendations. The \Lpq{p}{q} loss is a matrix-extension of the more popular $L_p$ losses on vectors, and relates to the \Lpq{p}{q} norm of a matrix which has had many applications in machine learning~\citep{ding2006r,kong2011robust,nie2010efficient}. We refer to aggregation by minimizing \Lpq{p}{q} loss as defined in Equation~\eqref{eqn:aggregation} as \emph{\Lpq{p}{q} aggregation}.

Equation~\eqref{eqn:aggregation} does not specify how to break ties between multiple minimizers. For concreteness, we select the minimizer $\aggr$ with minimum empirical $L_2$ norm. Formally, letting
\[\widehat{F} = \underset{f \in \fclass}{\text{argmin}}\ \left\{ \sum_{i\in \allR} \left[ \sum_{a \in P(i)} \left|y_{ia} - f(\vec{x}_{ia})\right|^p \right]^{\frac{q}{p}} \right\}^{\frac{1}{q}}\]
be the set of all \Lpq{p}{q} loss minimizers, we break ties by choosing
\begin{align}   \label{eqn:tie-break}
\aggr \in \underset{f \in \widehat{F}}{\text{argmin}}\ \sqrt{\sum_{i \in \allR} \sum_{a \in P(i)} f(\vec{x}_{ia})^2}.
\end{align}
Observe that since the \Lpq{p}{q} loss and constraint set are convex, $\widehat{F}$ is also a convex set. Hence, $\aggr$ as defined by Equation~\eqref{eqn:tie-break} is unique. We emphasize that although we use minimum $L_2$ norm for tie-breaking, all of our results hold under any reasonable tie-breaking method, such as the minimum $L_k$ norm for any $k \in (1, \infty)$.

Once the function $\aggr$ has been computed, it can be applied to every review (for all reviewers $i$ and papers $a$) to obtain a new overall recommendation $\aggr(\vec{x}_{ia})$. There is a separate\,---\,almost orthogonal\,---\,question of how to aggregate the overall recommendations of several reviewers on a paper into a single recommendation (typically this is done by taking the average). In our theoretical results we are agnostic to how this additional aggregation step is performed, but we return to it in our experiments in Section~\ref{sec:empirical}.

We remark that an alternative approach would be to learn a monotonic function $\widehat{g}_i:\domX\rightarrow \domY$ for each reviewer (which best captures their recommendations), and then aggregate these functions into a single function $\aggr$. We chose not to pursue this approach, because in practice there are very few examples per reviewer, so it is implausible that we would be able to accurately learn the reviewers' individual functions.

\subsection{Axiomatic Properties}
\label{subsec:axioms}

In social choice theory, the most common approach\,---\, primarily attributed to \citet{Arr51}\,---\,for comparing different aggregation methods is to determine which desirable axioms they satisfy. We take the same approach in order to determine the values of the hyperparameters $p$ and $q$ for the \Lpq{p}{q} aggregation in Equation~\eqref{eqn:aggregation}.

We stress that axioms are defined for aggregation methods and not aggregate functions. Informally, an aggregation method is a function that takes as input all the reviews $\{(\crit_{ia}, y_{ia})\}_{i \in \allR, a \in P(i)}$, and outputs an aggregate function $\aggr: \domX \to \domY$. We do not define an aggregation method formally to avoid introducing cumbersome notation that will largely be useless later. It is clear that for any choice of hyperparameters $p,q \in \normDom$, \Lpq{p}{q} aggregation (with tie-breaking as defined by Equation~\ref{eqn:tie-break}) is an aggregation method.

Social choice theory essentially relies on counterfactual reasoning to identify scenarios where it is clear how an aggregation method should behave.  To give one example, the \emph{Pareto efficiency} property of voting rules states that if all voters prefer alternative $a$ to alternative $b$, then $b$ should \emph{not} be elected; this situation is extremely unlikely to occur, yet Pareto efficiency is obviously a property that any reasonable voting must satisfy. With this principle in mind, we identify a setting in our problem where the requirements are very clear, and then define our axioms in that setting.

For all of our axioms, we restrict attention to scenarios where every reviewer reviews every paper, that is, $P(i) = \allP$ for every $i$. Moreover, we assume that the papers have `objective' criteria scores, that is, the criteria scores given to a paper are the same across all reviewers, so the only source of disagreement is how the criteria scores should be mapped to an overall recommendation. We can then denote the criteria scores of a paper $a$ simply as $\crit_a$, as opposed to $\crit_{ia}$, since they  do not depend on $i$. We stress that our framework does \emph{not} require these assumptions to hold\,---\,they are only used in our axiomatic characterization, namely Theorem~\ref{thm:master} in the next section. 

An axiom is satisfied by an aggregation method if its statement holds for every possible number of reviewers $n$ and number of papers $m$, and for all possible criteria scores and overall recommendations. We start with the simplest axiom, consensus, which informally states that if there is a paper such that all reviewers give it the same overall recommendation, then $\aggr$ must agree with the reviewers; this axiom is closely related to the \emph{unanimity} axiom in social choice.  

\begin{axiom}[Consensus]
For any paper $a \in \allP$, if all reviewers report identical overall recommendations $y_{1a}=y_{2a}=\cdots = y_{m a} = r$  for some $r\in \domY$, then $\aggr(\crit_a)=r$.
\end{axiom}

Before presenting the next axiom, we require another definition: we say that paper $a \in \allP$ \emph{dominates} paper $b \in \allP$ if there exists a bijection $\sigma: \allR \to \allR$ such that for all $i \in \allR$, $y_{ia} \geq y_{\sigma(i)b}$. Equivalently (and less formally), paper $a$ dominates paper $b$ if the \emph{sorted} overall recommendations given to $a$ pointwise-dominate the \emph{sorted} overall recommendations given to $b$. Intuitively, in this situation, $a$ should receive a (weakly) higher overall recommendation than $b$, which is exactly what the axiom requires; it is similar to the classic Pareto efficiency axiom mentioned above. 

\begin{axiom}[Efficiency]
For any pair of papers $a,b \in \allP$, if $a$ dominates $b$,  then $\aggr(\crit_a)\geq \aggr(\crit_b)$. 
\end{axiom}
Our positive result, which will be presented shortly, satisfies this notion of efficiency. On the other hand, we also use this axiom to prove a negative result; an important note is that the negative result requires a condition that is significantly weaker than the aforementioned  definition of efficiency. We revisit this point at the end of Section~\ref{sec:efficiency_violated}.

Our final axiom is \emph{strategyproofness}, a game-theoretic property that plays a major role in social choice theory~\citep{Moul83}. Intuitively, strategyproofness means that reviewers have no incentive to misreport their overall recommendations: They cannot bring the aggregate recommendations\,---\,the community's consensus about the relative importance of various criteria\,---\,closer to their own through strategic manipulation.

\begin{axiom}[Strategyproofness]
For each reviewer $i\in \allR$, and all possible manipulated recommendations $\vec{\over}_i' \in \domY^m$, if $\vec{\over}_i = (\over_{i1}, \over_{i2}, \dots, \over_{im})$ is replaced with $\vec{\over}_i'$, then
\begin{equation} 
\label{EqnStrategyproofness}
\|(\aggr(\crit_1),\ldots,\aggr(\crit_m)) - \vec{\over}_i\|_2\leq \|(\widehat{g}(\crit_1),\ldots,\widehat{g}(\crit_m)) - \vec{\over}_i\|_2,
\end{equation}
where $\aggr$ and $\widehat{g}$ are the aggregate functions obtained from the original and manipulated reviews, respectively.
\end{axiom}
The implicit `utilities' in this axiom~\eqref{EqnStrategyproofness} are defined in terms of the $L_2$ norm. This choice is made only for concreteness, and all our results hold for any norm $L_\ell$, $\ell \in \normDom$, in the definition~\eqref{EqnStrategyproofness}.

\section{Main Result}  \label{sec:results}

In Section~\ref{sec:frame}, we introduced \Lpq{p}{q} aggregation as a family of rules for aggregating individual opinions towards a consensus mapping from criteria scores to recommendations. But that definition, in and of itself, leaves open the question of how to choose the values of $p$ and $q$ in a way that leads to the most socially desirable outcomes. The axioms of Section~\ref{subsec:axioms} allow us to give a satisfying answer to this question. Specifically, our main theoretical result is a characterization of $L(p,q)$ aggregation in terms of the three axioms.  

\begin{theorem} \label{thm:master}
\Lpq{p}{q} aggregation, where $p,q \in \normDom$, satisfies consensus, efficiency, and strategyproofness if and only if $p = q = 1$.
\end{theorem}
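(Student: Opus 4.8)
The plan is to prove the theorem in two directions. For the ``if'' direction, I would fix $p=q=1$ and verify that $L(1,1)$ aggregation satisfies all three axioms. Here the key observation is that under the axiom setting (every reviewer reviews every paper, objective criteria scores $\crit_a$), the $L(1,1)$ loss decouples across papers: it is $\sum_{i}\sum_{a}|\over_{ia}-f(\crit_a)|$, so for each fixed paper $a$ the contribution is $\sum_i |\over_{ia} - f(\crit_a)|$, minimized (over the unconstrained value $f(\crit_a)$) by any median of $\{\over_{ia}\}_{i\in\allR}$. The monotonicity constraint couples the papers, but since medians are themselves monotone in the pointwise-dominance order, one can argue that an isotonic-regression-type solution exists whose value at each $\crit_a$ is a median; the min-$L_2$ tie-break then pins down the specific median when the set of minimizers at $\crit_a$ is an interval. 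From this ``aggregate = coordinatewise median (suitably monotonized)'' description, consensus is immediate (if all $\over_{ia}=r$ then the median is $r$ and the constant function $r$ is feasible and optimal at that point); efficiency follows because sorted-dominance of $a$ over $b$ implies the median for $a$ is $\ge$ the median for $b$; and strategyproofness follows from the classical fact that the median is a strategyproof aggregator in the $L_2$-distance sense (a single reviewer moving $\over_{ia}$ cannot move the median toward themselves past where it already is), together with the fact that the aggregate value at $\crit_a$ depends on reviewer $i$'s reports essentially only through the per-paper medians.

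For the ``only if'' direction, I would suppose $(p,q)\neq(1,1)$ and construct explicit counterexamples showing one of the axioms fails. I expect the paper splits this into cases---roughly, $q>1$ violates one axiom and $p>1$ violates another---as hinted by the forward references to Section~\ref{sec:efficiency_violated} (efficiency is violated in some regime) and by the remark that the negative result needs only a weak consequence of efficiency. Concretely, for $p>1$ with few papers one can take a single paper reviewed by several reviewers with spread-out recommendations: the $L_p$-optimal value is a power-mean-like quantity strictly between the extremes that is manipulable (a reviewer reporting a more extreme value drags the aggregate toward their true value), breaking strategyproofness; and for $q>1$, using two or more papers, the outer $L_q$ norm makes the optimal values at different papers interact so that the aggregate at a ``unanimous'' or ``dominating'' paper gets pulled by the other paper's disagreement, breaking consensus or efficiency. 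Handling $p=\infty$ or $q=\infty$ requires separate small instances (the $L_\infty$ minimizer is the midrange, which is manipulable and also not consensus-respecting in the presence of a second paper).

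The main obstacle I anticipate is the careful treatment of ties and the monotonicity constraint in the ``if'' direction: I need to show that the min-$L_2$ tie-break actually selects a function whose value at each $\crit_a$ is the appropriate median even after the isotonic projection, and that this holds for arbitrary configurations of the $\crit_a$'s (including repeated or incomparable score vectors, where the monotonicity constraint is vacuous between some papers but binding between others). Establishing a clean ``closed form'' for the $L(1,1)$ aggregate at the points $\crit_a$---something like: it is the $L_2$-projection onto the order polytope of the vector of per-paper medians, which still has median-like coordinates---is the technical heart, and once it is in place the three axioms should follow without much further work. A secondary, more mechanical obstacle is ensuring the counterexamples in the ``only if'' direction are simultaneously valid for all $p$ (resp.\ $q$) in the relevant range and robust to the tie-breaking rule, which the paper claims holds for any reasonable tie-break.
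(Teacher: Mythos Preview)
Your ``if'' direction is essentially the paper's argument, and your worry about isotonic projection is misplaced: in the axiom setting each reviewer's true function $g_i^\star$ is monotone, so whenever $\crit_a\ge\crit_b$ we have $y_{ia}\ge y_{ib}$ for every $i$, and hence the per-paper (left-)medians are already monotone. No projection is needed; the unconstrained minimizer is feasible, and the min-$L_2$ tie-break simply selects the left median at each $\crit_a$. Once you see this, the three axioms follow exactly as you outline.

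The ``only if'' direction, however, has a genuine error: you have the roles of $p$ and $q$ reversed. With a \emph{single} paper, the inner $p$-norm collapses: $\big(|y_{ia}-f(\crit_a)|^p\big)^{q/p}=|y_{ia}-f(\crit_a)|^q$, so the $L(p,q)$ loss depends only on $q$. Your proposed single-paper construction therefore says nothing about $p>1$; it is exactly the argument the paper uses to show that $q>1$ violates \emph{strategyproofness} (two reviewers with recommendations $0$ and $1$; the aggregate is $0.5$, and reporting $2$ pulls it to $1$). Conversely, the cross-paper coupling you attribute to the outer $q$-norm in fact comes from the inner $p$-norm across a reviewer's papers when $q=1$: the paper handles $p\in(1,\infty)$, $q=1$ by a two-paper, three-reviewer instance where the $L(p,1)$ minimizer is the Fermat point (in $L_p$) of three planar points, and shows this violates \emph{efficiency}; the remaining case $p=\infty$, $q=1$ is dispatched by a separate two-paper, two-reviewer instance violating \emph{consensus}. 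So the correct decomposition is: $q>1$ breaks strategyproofness; $p\in(1,\infty),\,q=1$ breaks efficiency; $p=\infty,\,q=1$ breaks consensus. Your current plan would not establish any of these cases as written.
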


We remark that for $p=q$, Equation~\eqref{eqn:aggregation} does not distinguish between different reviewers, that is, the aggregation method pools all reviews together. We find this interesting, because the \Lpq{p}{q} aggregation framework does have enough power to make that distinction, but the axioms guide us towards a specific solution, \Lpq{1}{1}, which does not.

Turning to the proof of the theorem, we start from the easier `if' direction.

\subsection{$p=q=1$ Satisfies All Three Axioms}

\begin{lemma} \label{lem:positive}
\Lpq{p}{q} aggregation with $p = q = 1$ satisfies consensus, efficiency and strategyproofness.
\end{lemma}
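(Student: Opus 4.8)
The plan is to prove the three properties separately for $L(1,1)$ aggregation. The crucial simplifying observation, which I would establish first, is that when $p=q=1$ the objective in Equation~\eqref{eqn:aggregation} collapses to $\sum_{i \in \allR} \sum_{a \in P(i)} |y_{ia} - f(\crit_{ia})|$, so the aggregation pools all reviews together and simply fits a monotonic function minimizing total $L_1$ error. Moreover, under the axiom setting every reviewer reviews every paper and criteria scores are objective, so the data for paper $a$ is the multiset $\{y_{ia}\}_{i \in \allR}$ attached to the single point $\crit_a$, and the objective decouples across the distinct points $\crit_a$ up to the monotonicity constraint linking them. I would note that, unconstrained, the $L_1$-optimal value at a point with values $\{y_{ia}\}_i$ is any median; with the monotonicity constraint across points, the optimal $\aggr$ values are an "isotonic median regression," and I expect to invoke standard facts about such solutions (existence, and that coordinatewise one can take medians when the unconstrained medians already respect the ordering).

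**Consensus and efficiency**

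For \textbf{consensus}: if $y_{1a} = \cdots = y_{ma} = r$, I would argue that any optimal $\aggr$ must satisfy $\aggr(\crit_a) = r$. Intuitively, moving $\aggr(\crit_a)$ toward $r$ strictly decreases the sum $\sum_i |r - \aggr(\crit_a)| = m|r - \aggr(\crit_a)|$ while, by monotonicity of the constraint, one can push the value toward $r$ without violating feasibility (clamp it). The only subtlety is the tie-breaking rule~\eqref{eqn:tie-break}, but since $\aggr(\crit_a) = r$ is forced in \emph{every} loss minimizer, tie-breaking is irrelevant here. For \textbf{efficiency}: suppose $a$ dominates $b$, so the sorted recommendations for $a$ pointwise dominate those for $b$. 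I want $\aggr(\crit_a) \ge \aggr(\crit_b)$. If $\crit_a \ge \crit_b$ or $\crit_b \ge \crit_a$ this follows from monotonicity of $\aggr \in \fclass$ directly (in the first case immediately; in the second I would use that the loss at $b$ with a dominating value set is minimized no lower than at $a$). When $\crit_a$ and $\crit_b$ are incomparable, monotonicity imposes no direct link, so I must use optimality: I would argue by a swapping/exchange argument that if $\aggr(\crit_a) < \aggr(\crit_b)$ one could swap the two values (adjusting the rest of the function consistently, which is possible precisely because the two points are incomparable) and weakly decrease the pooled $L_1$ loss using the domination of sorted value-vectors, contradicting either optimality or the $L_2$ tie-break. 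This exchange argument for incomparable points is where I expect to spend the most care.

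**Strategyproofness — the main obstacle**

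For \textbf{strategyproofness}, the key is that $L(1,1)$ fitting is, on each point, a median computation, and medians are the canonical strategyproof estimator. Concretely, fix reviewer $i$ and consider the vector $(\aggr(\crit_1), \ldots, \aggr(\crit_m))$. I claim reviewer $i$ cannot bring this vector closer to $\vec{\over}_i$ in $L_2$ by misreporting. The clean way is to show that coordinatewise, $\aggr(\crit_a)$ is a median of $\{y_{ja}\}_j$ whenever the coordinatewise medians happen to form a monotone sequence, and more generally the isotonic $L_1$ solution has the property that reviewer $i$'s report $y_{ia}$ only affects $\aggr(\crit_a)$ in the "honest" direction — it can never overshoot past $y_{ia}$, and reporting more extreme values cannot move $\aggr(\crit_a)$ past where truthful reporting already placed it relative to $y_{ia}$. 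Formally I would use: (i) the solution is monotone nondecreasing in each reported $y_{ia}$; (ii) $\aggr(\crit_a)$ lies between the honest outcome and $y_{ia}$ is \emph{not} quite what's needed — rather, the manipulated outcome $\widehat{g}(\crit_a)$ satisfies $\min(\aggr(\crit_a), y_{ia}) \le \widehat{g}(\crit_a)$ or $\widehat{g}(\crit_a) \le \max(\aggr(\crit_a), y_{ia})$ appropriately, i.e. misreporting either leaves $\aggr(\crit_a)$ fixed or pushes it away from $y_{ia}$. Summing $(\widehat{g}(\crit_a) - y_{ia})^2 \ge (\aggr(\crit_a) - y_{ia})^2$ over $a$ then gives~\eqref{EqnStrategyproofness}. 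The hard part will be handling the coupling across coordinates induced by monotonicity together with the $L_2$ tie-breaking rule simultaneously: I would likely argue that the tie-break selects the projection of the origin onto the (convex) solution set, preserve the "no-overshoot" property under this projection, and invoke a potential/monotone-comparative-statics argument (in the spirit of the classical proof that the median mechanism is group-strategyproof) rather than a brute-force case analysis. If a direct argument proves unwieldy, a fallback is to first prove everything ignoring ties (generic data), then extend by a limiting/perturbation argument using continuity of the $L_2$-selected minimizer.
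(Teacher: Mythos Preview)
Your proposal overlooks the paper's central simplification: the framework assumes each reviewer $i$ has a monotonic function $\truef_i \in \fclass$ with $y_{ia} = \truef_i(\crit_a)$. In the axiom setting (objective criteria scores), this means that whenever $\crit_a \geq \crit_b$ we have $y_{ia} \geq y_{ib}$ for \emph{every} reviewer $i$, so the pointwise left medians $\text{left-med}(\{y_{ia}\}_{i\in\allR})$ are automatically monotonic in $\crit_a$. Hence the unconstrained $L_1$ minimizer is already feasible, and the paper shows directly that $\aggr(\crit_a) = \text{left-med}(\{y_{ia}\}_{i\in\allR})$ (the left median being singled out by the minimum-$L_2$ tie-break). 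All three axioms then follow in a few lines from elementary median properties: consensus because the median of identical values is that value; efficiency because sorted-domination of the two score vectors immediately orders their medians, with no case split on whether $\crit_a$ and $\crit_b$ are comparable; strategyproofness because each $\aggr(\crit_a)$ depends only on the scores at $a$ and a single voter cannot move a median toward her own value.

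Without this observation your sketched arguments are both harder and fragile. The swap you propose for efficiency when $\crit_a,\crit_b$ are incomparable is not safe in general: exchanging the values at $\crit_a$ and $\crit_b$ can break monotonicity with respect to a third paper $c$ whose criteria vector is comparable to exactly one of $\crit_a,\crit_b$, so ``adjusting the rest of the function consistently'' is not obviously possible and would itself need a proof. Similarly, the isotonic-coupling and monotone-comparative-statics machinery you outline for strategyproofness is aimed at a difficulty that, for the truthful profile, simply does not arise---there is no coupling once you know the solution is the pointwise left median. I would restructure the proof to establish the left-median characterization first (this is where the assumption $y_{ia}=\truef_i(\crit_a)$ is used, in showing both well-definedness and feasibility); after that, each axiom is essentially one line.
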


\begin{proof}
The key idea of the proof lies in the form taken by the minimizer of \Lpq{1}{1} loss. When each reviewer reviews every paper and the papers have objective criteria scores, \Lpq{1}{1} aggregation reduces to computing
\begin{align}   \label{eqn:l11-opt}
\aggr \in \underset{f \in \fclass}{\text{argmin}}\ \left\{ \sum_{i\in \allR} \sum_{a \in \allP} \left|y_{ia} - f(\vec{x}_{a})\right| \right\},
\end{align}
where ties are broken by picking the minimizer with minimum $L_2$ norm. We claim that the aggregate function is given by
\[\aggr(\crit_a) = \text{left-med}(\{y_{ia}\}_{i \in \allR}) \quad \forall a \in \allP,\]
where $\text{left-med}(\cdot)$ of a set of points is their left median. We prove this claim by showing four parts: 
\begin{itemize}
    \item[(i)] $\aggr$ is a valid function,
    \item[(ii)] $\aggr$ is an unconstrained minimizer of the objective in~\eqref{eqn:l11-opt},
    \item[(iii)] $\aggr$ satisfies the constraints of~\eqref{eqn:l11-opt}, i.e., $\aggr \in \fclass$, and
    \item[(iv)] $\aggr$ has the minimum $L_2$ norm among all minimizers of~\eqref{eqn:l11-opt}.
\end{itemize}

We start by proving part (i). This part can only be violated if there are two papers $a$ and $b$ such that $\crit_a = \crit_b$, but $\text{left-med}(\{y_{ia}\}_{i \in \allR}) \neq \text{left-med}(\{y_{ib}\}_{i \in \allR})$, leading to $\aggr$ having two function values for the same $\crit$-value. However, we assumed that each reviewer $i$ has a function $\truef_i$ used to score the papers. So, for the two papers $a$ and $b$, we would have $y_{ia} = \truef_i(\crit_a) = \truef_i(\crit_b) = y_{ib}$ for every $i$, giving us $\text{left-med}(\{y_{ia}\}_{i \in \allR}) = \text{left-med}(\{y_{ib}\}_{i \in \allR})$. Therefore, $\aggr$ is a valid function.

For part (ii), consider the optimization problem~\eqref{eqn:l11-opt} without any constraints. Denote the objective function as $G(f)$. Rearranging terms, we obtain
\begin{align}   \label{eqn:l11-obj}
G(f) = \sum_{a \in \allP} \sum_{i\in \allR} \left|y_{ia} - f(\vec{x}_{a})\right|.
\end{align}
Consider the inner summation $\sum_{i\in \allR} \left|y_{ia} - f(\vec{x}_{a})\right|$; it is obvious that this quantity is minimized when $f(\crit_a)$ is any median of the $\{y_{ia}\}_{i \in 
\allR}$ values. Hence, we have
\begin{equation}   
\label{eqn:median-opt}
\begin{split}
G(f) &= \sum_{a \in \allP} \sum_{i\in \allR} \left|y_{ia} - f(\vec{x}_{a})\right|\\
& \geq \sum_{a \in \allP} \sum_{i\in \allR} \left|y_{ia} - \text{left-med}(\{y_{ia}\}_{i \in \allR})\right|\\
& = G(\aggr),
\end{split}
\end{equation}
where $f$ is an arbitrary function. Therefore, $\aggr$ minimizes the objective function even in the absence of any constraints, proving part (ii).

Turning to part (iii), we show that $\aggr$ satisfies the monotonicity constraints, i.e., $\aggr \in \fclass$. Suppose $a, b \in \allP$ are such that $\crit_a \geq \crit_b$. Using the fact that each reviewer $i$ scores papers based on the function $\truef_i$, we have $y_{ia} = \truef_i(\crit_a)$ and $y_{ib} = \truef_i(\crit_b)$. And since $\truef_i \in \fclass$ obeys monotonicity constraints, we obtain $y_{ia} \geq y_{ib}$ for every $i$. This trivially implies that $\text{left-med}(\{y_{ia}\}_{i \in \allR}) \geq \text{left-med}(\{y_{ib}\}_{i \in \allR})$, i.e., $\aggr(\crit_a) \geq \aggr(\crit_b)$, completing part (iii).

Finally, we prove part (iv). Observe that Equation~\eqref{eqn:median-opt} is a strict inequality if there is a paper $a$ for which $f(\crit_a)$ is not a median of the $\{y_{ia}\}_{i \in \allR}$ values. In other words, the only functions $f$ that have the same objective function value as $\aggr$ are of the form
\begin{align}   \label{eqn:medians}
f(\crit_a) \in \text{med}(\{y_{ia}\}_{i \in \allR}) \quad \forall a \in \allP,
\end{align}
where $\text{med}(\cdot)$ of a collection of points is the set of all points between (and including) the left and right medians. Hence, all other minimizers of~\eqref{eqn:l11-opt} must satisfy Equation~\eqref{eqn:medians}. Observe that $\aggr$ is pointwise smaller than any of these functions, since it computes the left median at each of the $\crit$-values. Therefore, $\aggr$ has the minimum $L_2$ norm among all possible minimizers of~\eqref{eqn:l11-opt}, completing the proof of part (iv). 

Combining all four parts proves that $\aggr$ is indeed the aggregate function chosen by \Lpq{1}{1} aggregation. We use this to prove that \Lpq{1}{1} aggregation satisfies consensus, efficiency and strategyproofness.

\emph{Consensus.} Let $a \in \allP$ be a paper such that $y_{1a} = y_{2a} = \cdots = y_{ma} = r$ for some $r$. Then, $\text{left-med}(\{y_{ia}\}_{i \in \allR}) = r$. Hence, $\aggr(\crit_a) = r$, satisfying consensus.

\emph{Efficiency.} Let $a, b \in \allP$ be such that $a$ dominates $b$. In other words, the sorted overall recommendations given to $a$ pointwise-dominate the sorted overall recommendations given to $b$. So, by definition, $\text{left-med}(\{y_{ia}\}_{i \in \allR})$ is at least as large as $\text{left-med}(\{y_{ib}\}_{i \in \allR})$. That is, $\aggr(\crit_a) \geq \aggr(\crit_b)$, satisfying efficiency.

\emph{Strategyproofness.}
Let $i$ be an arbitrary reviewer. Observe that in this setting, the aggregate score $\aggr(\crit_a)$ of a paper $a$ depends only on the score $y_{ia}$ and not on other scores $\{y_{ib}\}_{b \neq a}$ given by reviewer $i$. In other words, the only way to manipulate $\aggr(\crit_a) = \text{left-med}(\{y_{i'a}\}_{i' \in \allR})$ is by changing $y_{ia}$. Consider three cases. Suppose $y_{ia} < \aggr(\crit_a)$. In this case, if reviewer $i$ reports $y_{ia}' \leq \aggr(\crit_a)$, then there is no change in the aggregate score of $a$. On the other hand, if $y_{ia}' > \aggr(\crit_a)$, then either the aggregate score of $a$ remains the same or increases, making things only worse for reviewer $i$. The other case of $y_{ia} > \aggr(\crit_a)$ is symmetric to $y_{ia} < \aggr(\crit_a)$. Consider the third case, $y_{ia} = \aggr(\crit_a)$. In this case, manipulation can only make things worse since we already have $|y_{ia} - \aggr(\crit_a)| = 0$. In summary, reporting $y_{ia}'$ instead of $y_{ia}$ cannot help decrease $|y_{ia} - \aggr(\crit_a)|$. Also, recall that $y_{ia}$ does not affect the aggregate scores of other papers, and hence manipulation of $y_{ia}$ does not help them either. Therefore, by manipulating any of the $y_{ia}$ scores, reviewer $i$ cannot bring the aggregate recommendations closer to her own, proving strategyproofness.
\end{proof}

\subsection{Violation of the Axioms When $(p,q) \neq (1,1)$} 
We now tackle the harder `only if' direction of Theorem~\ref{thm:master}. We do so in three steps: efficiency is violated by $p\in(1,\infty)$ and $q=1$ (Lemma~\ref{lem:ssd-negative}), strategyproofness is violated by \Lpq{p}{q} aggregation for all $q>1$ (Lemma~\ref{lem:sp-negative}), and consensus is violated by $p=\infty$ and $q=1$ (Lemma~\ref{lem:consensus}). Together, the three lemmas leave $p=q=1$ as the only option. Below we state the lemmas and give some proof ideas; the theorem's full proof is relegated to Appendix~\ref{app:proofs}.

It is worth noting that, although we have presented the lemmas as components in the proof of Theorem~\ref{thm:master}, they also have standalone value (some more than others). For example, if one decided that only strategyproofness is important, then Lemma~\ref{lem:sp-negative} below would give significant guidance on choosing an appropriate method. 

\subsubsection{Violation of efficiency}\label{sec:efficiency_violated}

In our view, the following lemma presents the most interesting and counter-intuitive result in the paper. 
\begin{lemma} \label{lem:ssd-negative}
\Lpq{p}{q} aggregation with $p \in (1,\infty)$ and $q = 1$ violates efficiency.
\end{lemma}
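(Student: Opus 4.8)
The plan is to construct an explicit counterexample with a small number of papers and reviewers in the "objective criteria scores, everyone reviews everything" setting, exhibit two papers $a$ and $b$ where $a$ dominates $b$ in the sorted-recommendation sense, and then show that the unique $L(p,q)$-minimizer with $q=1$ and $p\in(1,\infty)$ assigns $\aggr(\crit_a) < \aggr(\crit_b)$. With $q=1$ the objective decouples across reviewers, so for each reviewer $i$ the contribution is $\big(\sum_{a}|y_{ia}-f(\crit_a)|^p\big)^{1/p}$, and summing these gives the loss. The key structural fact to exploit is that, unlike the $p=1$ case where the per-paper optimum is the median (and hence insensitive to the other papers' scores), for $p>1$ the optimal value of $f(\crit_a)$ for a single reviewer is a $p$-power mean that is pulled toward that reviewer's scores on \emph{other} papers when monotonicity couples the values --- so a paper can be dragged down by being bundled (via the monotonicity constraint) with low-scored papers.

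Concretely, I would take $d=1$ so $\domX=\mathbb{R}$, pick criteria scores $\crit_c < \crit_a < \crit_b$ (or a chain forcing $f(\crit_c)\le f(\crit_a)\le f(\crit_b)$), and choose recommendation profiles so that the sorted scores of $a$ weakly dominate those of $b$ coordinatewise, yet the scores of $a$ have much higher \emph{variance} across reviewers while $b$'s are tightly clustered. The monotonicity constraint $f(\crit_a)\le f(\crit_b)$ will bind at the optimum: $b$'s tightly-clustered scores want $f(\crit_b)$ to be some small value, and this caps $f(\crit_a)$ below where $a$'s own scores (being spread out, some very high, some moderate) would place it under an unconstrained $p$-mean, but crucially I will engineer the profiles so the \emph{constrained} optimum still has $f(\crit_a) = f(\crit_b)$ pinned by $b$'s cluster at a level \emph{below} where $a$'s left-median would sit --- no wait, the cleaner route is the reverse: make $a$'s scores bimodal (some reviewers love it, some are lukewarm) so that a single $p$-mean compromise for $a$ lands low, while $b$'s scores are uniformly middling so $f(\crit_b)$ lands higher, with the chain of other papers pushing things so the $f$ value realizing the minimum has $f(\crit_a)<f(\crit_b)$ even though $a$ dominates $b$. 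I would verify the claimed optimum by checking first-order/subgradient optimality of the strictly convex (for $p\in(1,\infty)$) objective and confirming the monotonicity constraints hold, and also check the tie-breaking step is vacuous (unique minimizer, since $L_p$ with $p>1$ is strictly convex in the unconstrained directions that matter).

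The main obstacle I anticipate is twofold. First, getting the inequality to go the \emph{wrong} way (violating efficiency) requires that the coupling through monotonicity with \emph{auxiliary} papers actually reverses the order, since with only papers $a$ and $b$ and $a$ dominating $b$, one expects $f(\crit_a)\ge f(\crit_b)$ to come for free; so the counterexample genuinely needs a third paper (or more) lying between $a$ and $b$ in criteria-score order whose reviewer-scores are arranged adversarially, and finding the minimal such configuration is the creative step. Second, once the configuration is fixed, I must \emph{rigorously} identify the optimum: because the objective is a sum over reviewers of $L_p$ norms (not $p$-th powers), the first-order conditions are not as clean as "weighted mean," so I would likely argue via convexity plus an explicit feasible point whose subgradient-optimality I verify, or reduce to a low-dimensional optimization over the finitely many relevant values $f(\crit_c),f(\crit_a),f(\crit_b)$ and bound the objective directly. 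I also need to handle the whole interval $p\in(1,\infty)$ uniformly, which suggests choosing the profile so that the strict inequality $f(\crit_a)<f(\crit_b)$ holds by a margin continuous in $p$, or giving a separate limiting argument as $p\to\infty$ if a single configuration does not cover the whole range.
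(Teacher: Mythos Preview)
Your plan has a structural flaw that blocks the construction entirely. You set $d=1$ and order the criteria as $\crit_c<\crit_a<\crit_b$. But the paper's framework assumes each reviewer's overall recommendation is the value of a \emph{monotone} function $\truef_i$ at the criteria score. With $d=1$ all criteria scores are totally ordered, so $\crit_a<\crit_b$ forces $y_{ia}=\truef_i(\crit_a)\le \truef_i(\crit_b)=y_{ib}$ for every reviewer $i$. Then the sorted recommendations of $a$ are coordinatewise $\le$ those of $b$, and $a$ \emph{cannot} dominate $b$. Conversely, if you flip the order to $\crit_a>\crit_b$, the monotonicity constraint on $\aggr$ itself gives $\aggr(\crit_a)\ge\aggr(\crit_b)$ automatically, so no violation is possible. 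In short: with $d=1$ there is no valid instance in which $a$ dominates $b$ and $\aggr(\crit_a)<\aggr(\crit_b)$ can occur. You must take $d\ge 2$ and make $\crit_a,\crit_b$ \emph{incomparable}, so that no monotonicity constraint links $\aggr(\crit_a)$ and $\aggr(\crit_b)$ and reviewers are free to score them in any pattern.

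Your diagnosis of the mechanism is also off. You conjecture that ``with only papers $a$ and $b$ and $a$ dominating $b$, one expects $f(\crit_a)\ge f(\crit_b)$ to come for free,'' and that the reversal must therefore be manufactured by an auxiliary paper whose binding monotonicity constraint couples the values. The paper's proof refutes exactly that intuition: it uses just two papers with incomparable criteria (hence \emph{no} monotonicity constraints at all) and three reviewers with recommendations $(z,0)$, $(0,1)$, $(0,0)$. The $L(p,1)$ objective is then the sum of $L_p$ distances from $(f_1,f_2)$ to these three points, i.e.\ a generalized Fermat-point problem. The counterintuitive fact is that pushing the first vertex far out along the first axis (large $z$) biases the Fermat point toward the \emph{second} coordinate, yielding $\aggr_1<\aggr_2$ even though paper~1's sorted scores $(z,0,0)$ dominate paper~2's $(1,0,0)$. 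The uniform treatment of all $p\in(1,\infty)$ is obtained by passing to the limit $z\to\infty$, computing the limiting minimizer explicitly, and then pulling back to a finite $z$ via strict convexity and a level-set argument. No third paper and no active monotonicity constraint are involved; the violation is purely a geometric property of sums of $L_p$ norms for $p>1$.
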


It is quite surprising that such reasonable loss functions violate the  simple requirement of efficiency. In what follows we attempt to explain this phenomenon via a connection between our problem and the notion of the `Fermat point' of a triangle~\citep{spain1996fermat}. The explanation provided here demonstrates the negative result for \Lpq{2}{1} aggregation. The complete proof of the lemma for general values of $p \in (1,\infty)$ is quite involved, as can be seen in Appendix~\ref{app:proofs}. 

Consider a setting with $3$ reviewers and $2$ papers, where each reviewer reviews both papers. We let $\crit_1$ and  $\crit_2$ denote the respective objective criteria scores of the two papers. Assume that no score in $\{\crit_1,\crit_2\}$ is pointwise greater than or equal to the other score in that set. Let the overall recommendations given by the reviewers be $y_{11} = z$, $y_{21}=0$, $y_{31}=0$ to the first paper and $y_{12}=0$, $y_{22}=1$ and $y_{23}=0$ to the second paper. Under these scores, let $\aggr$ denote the aggregate function that minimizes the \Lpq{2}{1} loss. 

The Fermat point of a triangle is a point such that the sum of its (Euclidean) distances from all three vertices is minimized. Consider a triangle in $\mathbb{R}^2$ with vertices $(z,0)$, $(0,1)$ and $(0,0)$. Setting $z=2$, one can use known algorithms to compute the Fermat point of this triangle as $(0.25, 0.30)$. More generally, when the vertex $(z,0)$ is moved away from the rest of the triangle (by increasing $z$), the Fermat point paradoxically biases towards the other (second) coordinate. 

Connecting back to our original problem, by definition, the Fermat point of this triangle is exactly  $(\aggr(\crit_1),\aggr(\crit_2))$. When $z=2$, paper 1 receives scores $(2,0,0)$ in sorted order, which dominates the sorted scores $(1,0,0)$ of paper 2. However the aggregate score $\aggr(\crit_1) = 0.25$ of paper 1 is strictly smaller than $\aggr(\crit_2)=0.30$ of paper 2, thereby violating efficiency for the \Lpq{2}{1} loss.

As a final but important remark, the proof of Lemma~\ref{lem:ssd-negative} only requires a significantly weaker notion of efficiency. In this weaker notion, we consider two papers $a$ and $b$ such that their reviews are symmetric (formally, switching the labels $a$ and $b$ and switching the labels of some reviewers leaves the data unchanged). In this case, reducing the review scores of paper $b$ must lead to $\aggr(\crit_a)\geq \aggr(\crit_b)$. 

\subsubsection{Violation of strategyproofness}

\begin{lemma} \label{lem:sp-negative}
\Lpq{p}{q} aggregation with $q \in (1,\infty]$ violates strategyproofness.
\end{lemma}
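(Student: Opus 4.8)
The plan is to exhibit, for every fixed $q \in (1,\infty]$ and every $p \in \normDom$, a single instance (a fixed number of reviewers $n$, papers $m$, objective criteria scores $\crit_1,\dots,\crit_m$, and recommendations $\{y_{ia}\}$) together with a manipulating reviewer $i$ and a manipulated report $\vec{y}_i'$ such that inequality~\eqref{EqnStrategyproofness} fails. The key structural fact I would use is that when $q > p$ the $L(p,q)$ objective in~\eqref{eqn:aggregation} is \emph{not} separable across reviewers: the outer $L_q$-aggregation of the per-reviewer $L_p$-losses means that a reviewer whose current loss is large gets "down-weighted" relative to a reviewer whose current loss is small, so the aggregate $\aggr$ is pulled toward the recommendations of the low-loss reviewers. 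A reviewer can exploit this: by temporarily inflating her reported loss on papers she does not care about, she can make the optimizer discount those papers and instead track her more closely on the paper she does care about. So the manipulation I would design has the flavor "sacrifice accuracy on paper $b$ in order to gain on paper $a$."

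Concretely, I would look for the smallest instance that works — plausibly $n=2$ reviewers and $m=2$ papers, with criteria scores $\crit_1, \crit_2$ incomparable so the monotonicity constraint in $\fclass$ imposes no relation between $\aggr(\crit_1)$ and $\aggr(\crit_2)$, reducing the computation of $\aggr$ to an unconstrained convex optimization over the two scalars $(u,v) = (\aggr(\crit_1),\aggr(\crit_2)) \in \mathbb{R}^2$. With two reviewers the objective becomes $\bigl(|y_{11}-u|^p + |y_{12}-v|^p\bigr)^{q/p} + \bigl(|y_{21}-u|^p + |y_{22}-v|^p\bigr)^{q/p}$ (reading the $q=\infty$ case as the max of the two per-reviewer $L_p$ norms, and $p=\infty$ as the max inside). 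I would pick reviewer-1 truthful recommendations so that the honest optimizer $(u^\star, v^\star)$ leaves reviewer $1$ with positive error on paper $a$; then have reviewer $1$ report $y_{1b}'$ far from $y_{1b}$, which balloons reviewer $1$'s per-reviewer $L_p$ loss, so by the strict convexity/weighting effect the new optimizer $(u', v')$ moves $u'$ closer to $y_{1a}$. Checking $|y_{1a} - u'| < |y_{1a} - u^\star|$ while the coordinate $v$ is irrelevant to the manipulation (or only helps) would contradict~\eqref{EqnStrategyproofness}, whose left side is $\sqrt{(u-y_{1a})^2 + (v-y_{1b})^2}$ evaluated at honest versus manipulated optimizers — here the manipulated $y_{1b}'$ does not appear, since the norm in~\eqref{EqnStrategyproofness} is always against the \emph{true} $\vec{y}_i$.

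The main obstacle I expect is twofold. First, the boundary cases $q=\infty$ and $p=\infty$ behave differently from finite $p,q$: when $q=\infty$ the objective is a max of two norms and the optimizer can sit at a nonsmooth kink, so the "reweighting" intuition must be replaced by a direct case analysis of which reviewer is the active maximizer before and after the manipulation; I would likely handle $q=\infty$ (and the sub-case $p=\infty$) as a separate short argument. Second, for finite $q>p>1$ the optimizer $(u^\star,v^\star)$ is not available in closed form, so rather than solving for it I would argue qualitatively: write the first-order stationarity condition, observe that increasing reviewer $1$'s loss on paper $b$ strictly decreases the marginal "pull" her paper-$a$ term must overcome, and invoke strict convexity plus an implicit-function / monotone-comparative-statics argument to conclude $u'$ strictly separates from $u^\star$ in the direction of $y_{1a}$. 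The case $p \le q$ with $p>1$ is the substantive one; the case $p=1 < q$ should follow from the same instance (and is arguably easier, since the inner $L_1$ term is piecewise linear). Finally I would need to double-check the degenerate possibility that the honest optimizer already has reviewer $1$ at zero error on paper $a$, which I rule out by choosing reviewer $2$'s recommendation on paper $a$ strictly different from reviewer $1$'s, forcing a strict interior compromise.
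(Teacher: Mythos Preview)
Your proposal is substantially more complicated than what the paper does, and the central mechanism you describe is not quite right. The paper's construction uses a \emph{single} paper and two reviewers (scores $1$ and $0$). With only one paper the inner $L_p$ norm degenerates to an absolute value, so $p$ plays no role whatsoever; the objective is simply $|1-f|^q + |f|^q$ (or $\max(|1-f|,|f|)$ when $q=\infty$), whose unique minimizer is $f=\tfrac12$. If reviewer~1 instead reports $2$, the minimizer shifts to $1$, matching her true recommendation exactly. That is the entire proof. You are working hard to design a two-paper instance and to control the interaction between $p$ and $q$, when the one-paper reduction eliminates $p$ from the problem entirely.

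Separately, your stated intuition has a sign issue. You write that for $q>p$ ``a reviewer whose current loss is large gets down-weighted.'' Differentiating $\sum_i L_i^{q/p}$, the effective weight on reviewer $i$'s first-order condition is proportional to $L_i^{q/p-1}$; for $q>p$ this \emph{increases} in $L_i$, so a high-loss reviewer is up-weighted, not down-weighted. Your proposed manipulation (``inflate loss on paper $b$ so the optimizer discounts it and tracks me on paper $a$'') therefore does not match the actual comparative statics in the regime you single out, and in the regime $1<q<p$ the sign flips again. It is plausible that some two-paper construction can be pushed through with a careful implicit-function argument, but the analysis would have to be redone separately for $q>p$, $q<p$, $p=\infty$, and $q=\infty$, exactly the case split you flagged as an obstacle. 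The one-paper construction sidesteps all of this.
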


We prove the lemma via a simple construction with just one paper and two reviewers, who give the paper overall recommendations of $1$ and $0$, respectively. For $q\in(1,\infty)$, the aggregate score is 
$$\aggr = \underset{f \in \mathbb{R}}{\text{argmin}}\ \Big\{|1 - f|^q + |f|^q\Big\},$$
and for $q=\infty$, it is
$$\aggr = \underset{f \in \mathbb{R}}{\text{argmin}}\  \max\big(|1-f|, |f|\big).$$
Either way, the unique minimum is obtained at an aggregate score of $0.5$. If reviewer 1 reported an overall recommendation of $2$, however, the aggregate score would be $1$, which matches her `true' recommendation, thereby violating strategyproofness. See Appendix~\ref{app:sp-negative} for the complete proof.

\subsubsection{Violation of consensus}

\begin{lemma} \label{lem:consensus}
\Lpq{p}{q} aggregation with $p = \infty$ and $q = 1$ violates consensus.
\end{lemma}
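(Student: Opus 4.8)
The plan is to prove Lemma~\ref{lem:consensus} with a direct two-reviewer, two-paper counterexample, exploiting the fact that under $p=\infty$ each reviewer's loss is a \emph{maximum} over her papers. In the axiomatic setting, $L(\infty,1)$ aggregation minimizes $\sum_{i\in\allR}\max_{a\in\allP}|y_{ia}-f(\crit_a)|$ over $f\in\fclass$: a reviewer who is compelled to incur a large error on one paper contributes exactly that amount no matter what her errors are on the other papers, so she ``stops pushing'' on those papers. I would use this slack to let a unanimous recommendation be overridden by the tie-breaking rule.

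Concretely, take two reviewers and two papers $a,b$ whose objective criteria scores $\crit_a,\crit_b$ are incomparable (e.g.\ $\crit_a=(1,0)$ and $\crit_b=(0,1)$ when $d\ge 2$), so that monotonicity imposes no relation between them; then every assignment of values to $\crit_a$ and $\crit_b$ extends to a function in $\fclass$, and suitable $\truef_1,\truef_2$ exist. Let $y_{1a}=y_{2a}=1$ (so consensus demands $\aggr(\crit_a)=1$), $y_{1b}=0$, and $y_{2b}=2$. Writing $s=f(\crit_a)$ and $t=f(\crit_b)$, the objective becomes $\max(|1-s|,|t|)+\max(|1-s|,|2-t|)$, and the argument proceeds in three steps: (i) this is at least $|t|+|2-t|\ge 2$, with $2$ attained (e.g.\ at $(s,t)=(1,0)$), so the minimum loss is $2$; (ii) equality forces $t\in[0,2]$ and $|1-s|\le\min(t,2-t)$, which characterizes the loss-minimizer set $\widehat F$; (iii) the tie-break minimizes $s^2+t^2$ over $\widehat F$, and since $1-\min(t,2-t)\ge 0$ the optimal $s$ for a given $t$ is $1-\min(t,2-t)$, so a one-variable minimization of $(1-\min(t,2-t))^2+t^2$ on $[0,2]$ gives the unique optimum $t=\frac{1}{2}$, $s=\frac{1}{2}$ (the same holds for any $L_k$ tie-break, $k\in(1,\infty)$). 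Hence $\aggr(\crit_a)=\frac{1}{2}\neq 1$ and consensus fails.

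I expect step (iii) to be the delicate point: one must check that the tie-breaking rule genuinely \emph{forces} $s$ off the consensus value, not merely tolerates $s\neq 1$. That is why the construction sets the consensus value $r=1$ away from $0$ (the attractor of the $L_2$ tie-break) and makes the non-consensus paper $b$ straddle $r$ with reports $0$ and $2$: this pulls the loss-minimizing $t$ below $1$, which tightens the slack $\min(t,2-t)$ and drags $s$ down with it. If instead some reviewer had no forced error comparable to $|1-s|$ --- as with, say, three reviewers and a single dissenting report on $b$ --- that reviewer would keep ``caring'' about paper $a$ and pin $s=1$, and consensus would survive; engineering a configuration that blocks this is the crux. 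Everything else is routine: the argument never refers to $f$ outside $\{\crit_a,\crit_b\}$, so the framework's monotonicity and $\truef_i$ requirements hold automatically.
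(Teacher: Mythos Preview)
Your proposal is correct and is essentially identical to the paper's proof: the paper uses two reviewers and two papers with recommendations $\begin{pmatrix}0&1\\2&1\end{pmatrix}$, which is precisely your construction with the roles of $a$ and $b$ swapped. The paper's argument is slightly terser (it directly exhibits $(0.5,0.5)$ as a minimizer and compares its $L_2$ norm to that of any minimizer with the consensus coordinate equal to $1$, rather than fully solving the tie-break optimization as you do), but the construction, the minimum value $2$, the characterization of the minimizer set, and the tie-broken optimum $(\tfrac12,\tfrac12)$ all coincide.
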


Lemma~\ref{lem:consensus} is established via another simple construction: two papers, two reviewers, and overall recommendations
\[\mathbf{y} = \begin{bmatrix}
0 & 1\\
2 & 1
\end{bmatrix},\]
where $y_{ia}$ denotes the overall recommendation given by reviewer $i$ to paper $a$. Crucially, the two reviewers agree on an overall recommendation of $1$ for paper $2$, hence the aggregate score of this paper must also be $1$. But we show that \Lpq{\infty}{1} aggregation would \emph{not} return an aggregate score of $1$ for paper $2$. The formal proof appears in Appendix~\ref{app:consensus}.


\section{Implementation and Experimental Results}
\label{sec:empirical}

\newcommand{\ijcaiaccept}{27.27\%}
\newcommand{\subrev}{k}
\newcommand{\optfn}{\widetilde{f}}

In this section, we provide an empirical analysis of a few aspects of peer review through the approach of this paper. We employ a dataset of reviews from the 26$^{\text{th}}$ International Joint Conference on Artificial Intelligence (IJCAI 2017), which was made available to us by the program chair. 
To our knowledge, we are the first to use this dataset.

At submission time, authors were asked if review data for their paper could be included in an anonymized dataset, and, similarly, reviewers were asked whether their reviews could be included; the dataset provided to us consists of all reviews for which permission was given. Each review is tagged with a reviewer ID and paper ID, which are anonymized for privacy reasons. The criteria used in the conference are `originality', `relevance', `significance', `quality of writing' (which we call `writing'), and `technical quality' (which we call `technical'), and each is rated on a scale from $1$ to $10$. Overall recommendations are also on a scale from $1$ to $10$. In addition, information about which papers were accepted and which were rejected is included in the dataset.

\begin{table*}[t]
\centering
\begin{tabular}{|c|ccllllccl|}
\hline
\# of reviews by a reviewer & 1 & 2 & 3 & 4 & 5 & 6 & 7 & 8 & $\geq 9$ \\ \hline
Frequency                   & 238 & 96 & 92 & 120 & 146 & 211 & 628 & 187 & 7 \\ \hline
\end{tabular}
\caption{Distribution of number of papers reviewed by a reviewer.}
\label{tab:revs-per-reviewer}
\end{table*}

The number of papers in the dataset is $2380$, of which $649$ were accepted, which amounts to $27.27\%$. This is a large subset of the $2540$ submissions to the conference, of which $660$ were accepted, for an actual acceptance rate of $25.98\%$. The number of reviewers in the dataset is $1725$, and the number of reviews is $9197$. All but nine papers in the dataset have three reviews ($485$ papers), four reviews ($1734$ papers), or five reviews ($152$) papers. Table~\ref{tab:revs-per-reviewer} shows the distribution of the number of papers reviewed by reviewers.

We apply \Lpq{1}{1} aggregation (i.e., $p=q=1$), as given in Equation~\eqref{eqn:aggregation}, to this dataset to learn the aggregate function. Let us denote that function by $\optfn$. The optimization problem in Equation~\eqref{eqn:aggregation} is convex, and standard optimization packages can efficiently compute the minimizer. Hence, importantly, computational complexity is a nonissue in terms of implementing our approach. 

Once we compute the aggregate function $\optfn$, we calculate the aggregate overall recommendation of each paper $a$ by taking the median of the aggregate reviewer scores for that paper obtained by applying $\optfn$ to the objective scores:
\begin{align}
\optOver(a) =\text{median}(\{ \optfn(\crit_{ia}) \}_{i \in R(a)}) \quad \forall a \in \allP. 
\end{align}
Recalling that $\ijcaiaccept$ of the papers in the dataset were actually accepted to the conference, in our experiments we define the set of papers accepted by the aggregate function $\optfn$ as the the top $27.27\%$ of papers according to their respective $\optOver$ values. 
We now present the specific experiments we ran, and their results.


\begin{figure}[ht]
\centering
\begin{minipage}{.47\textwidth}
  \centering
  \includegraphics[width=\textwidth]{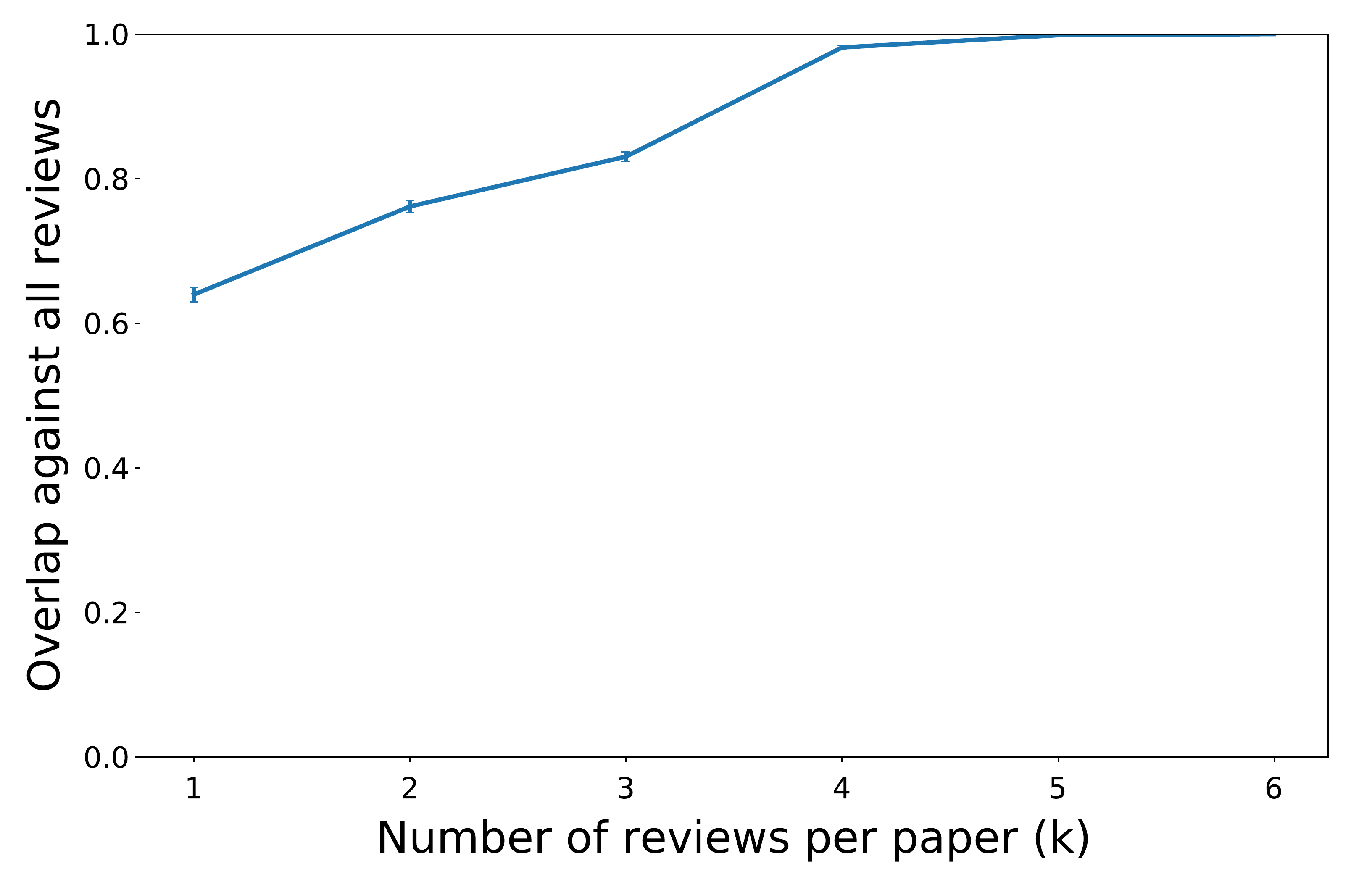}
  \caption{Fraction overlap as number of reviews per paper is restricted. Error bars depict $95\%$ confidence intervals, but may be too small to be visible for $k=4,5$.}
  \label{fig:revs-cutoff}
\end{minipage}\hfill
\begin{minipage}{.47\textwidth}
  \centering
  \includegraphics[width=\textwidth]{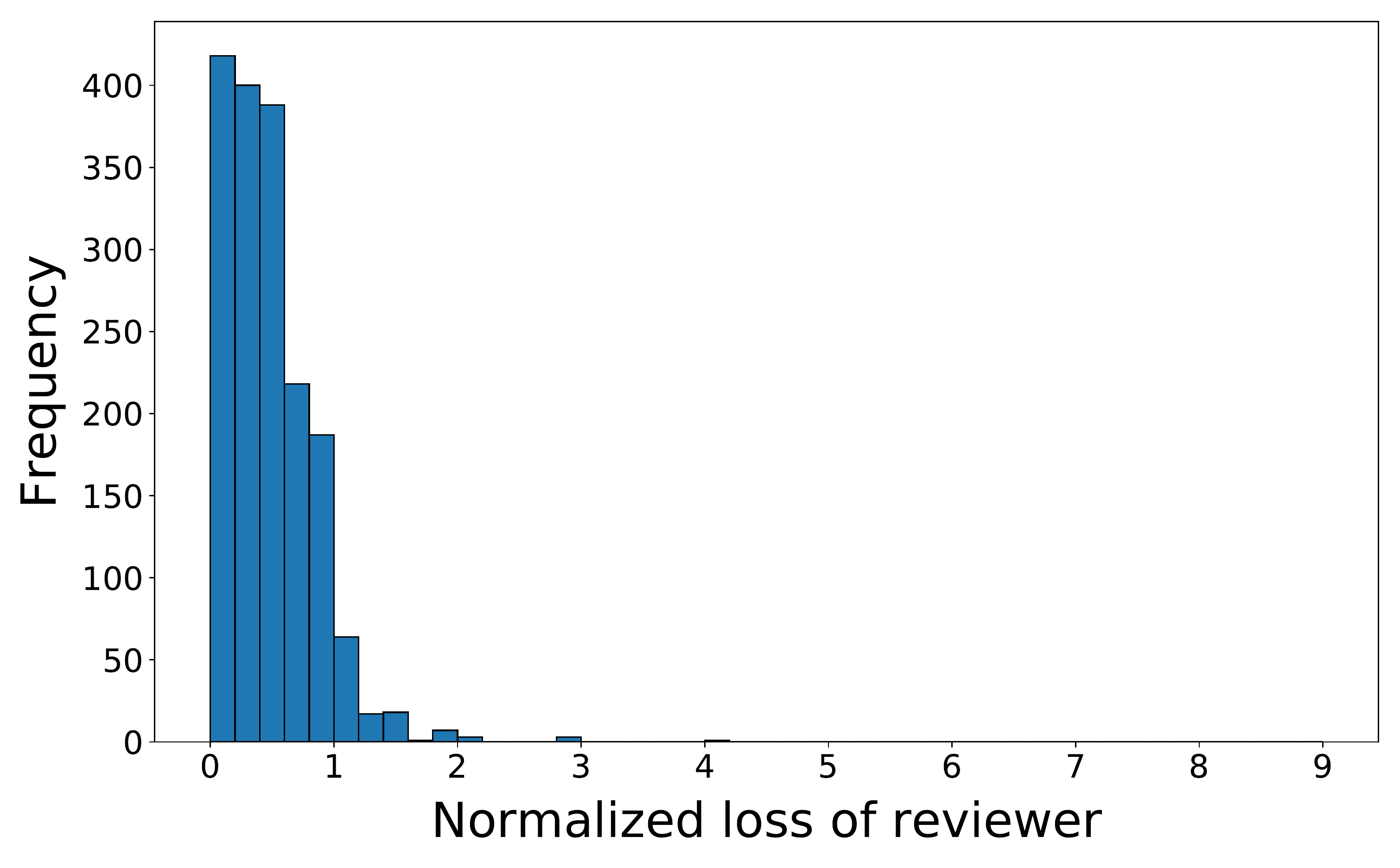}
  \caption{Frequency of losses of the reviewers for \Lpq{1}{1} aggregation, normalized by the number of papers reviewed by the respective reviewers.}
  \label{fig:loss-distribution}
\end{minipage}
\end{figure}

\subsection{Varying Number of Reviewers} 

In our first experiment, for each value of a parameter $\subrev \in \{1,\ldots,5\}$, we subsampled $\subrev$ distinct reviews for each paper uniformly at random from the set of all reviews for that paper  (if the paper had fewer than $\subrev$ to begin with then we retained all the reviews). We then computed an aggregate function, $\aggr_\subrev$, via \Lpq{1}{1} aggregation applied only to these subsampled reviews. Next, we found the set of top \ijcaiaccept{} papers as given by $\aggr_\subrev$ applied to the subsampled reviews. Finally, we compared the overlap of this set of top papers for every value of $\subrev$ with the set of top \ijcaiaccept{} papers as dictated by the overall aggregate function $\optfn$. 

The results from this experiment are plotted in Figure~\ref{fig:revs-cutoff}, and lead to several observations. First, the incremental overlap from $\subrev=4$ to $5$ is very small because there are very few papers that had $5$ or more reviews. Second, we see that the amount of overlap monotonically increases with the number of reviewers per paper $\subrev$, thereby serving as a sanity check on the data as well as our methods. Third, we observe the overlap to be quite high ($\approx 60\%$) even with a single reviewer per paper.


\subsection{Loss Per Reviewer}  
Next, we look at the loss of different reviewers, under $\optfn$ (obtained by \Lpq{1}{1} aggregation). In order for the losses to be on the same scale, we normalize each reviewer's loss by the number of papers reviewed by them. Formally, the normalized loss of reviewer $i$ (for $p=1$) is $$\frac{1}{|P(i)|}\sum_{a \in P(i)} |y_{ia} - \optfn(\crit_{ia})|.$$ The normalized loss averaged across reviewers is found to be $0.470$, and the standard deviation is $0.382$. Figure~\ref{fig:loss-distribution} shows the distribution of the normalized loss of all the reviewers. Note that the normalized loss of a reviewer can fall in the range $[0,9]$. These results thus indicate that the function $\optfn$ is indeed at least a reasonable representation of the mapping of the broader community.


\subsection{Overlap of Accepted Papers} 

We also compute the overlap between the set of top $27.27\%$ papers selected by \Lpq{1}{1} aggregation $\optfn$ with the \emph{actual} $27.27\%$  accepted papers. It is important to emphasize that we believe the set of papers selected by our method is \emph{better} than any hand-crafted or rule-based decision using the scores, since this aggregate represents the opinion of the community. Hence, to be clear, we do \emph{not} have a goal of maximizing the overlap. Nevertheless, a very small overlap would mean that our approach is drastically different from standard practice, which would potentially be disturbing. We find that the overlap is $79.2\%$, which we think is quite fascinating\,---\,our approach does make a significant difference, but the difference is not so drastic as to be disconcerting.

Out of intellectual curiosity, we also computed the pairwise overlaps of the papers accepted by \Lpq{p}{q} aggregation, for $p,q\in\{1,2,3\}$. We find that the choice of the reviewer-norm hyperparameter $q$ has more influence than the paper-norm hyperparameter $p$; we refer the reader to Appendix~\ref{app:influence} for details. Finally, in Appendix~\ref{app:visual} we present visualizations of \Lpq{1}{1} aggregation, which provide insights into the preferences of the community. 

\section{Discussion}

We address the problem of subjectivity in peer review by combining approaches from machine learning and social choice theory. A key challenge in the setting of peer review (e.g., when choosing a loss function) is the absence of ground truth, and we overcome this challenge via a principled, axiomatic approach.

One can think of the theoretical results of Section~\ref{sec:results} as supporting \Lpq{1}{1} aggregation using the tools of social choice theory, whereas the empirical results of Section~\ref{sec:empirical} focus on studying its \emph{behavior} on real data. Understanding this helps clear up another possible source of confusion: are we not overfitting by training on a set of reviews, and then applying the aggregate function to the same reviews? The answer is negative, because the process of learning the function $\aggr$ amounts to an aggregation of opinions about how criteria scores should be mapped to overall recommendations. Applying it to the data yields recommendations in $\domY$, whereas this function from $\domX$ to $\domY$ lives in a different space. 

That said, it is of intellectual interest to understand the statistical aspects of estimating the community's consensus mapping function, assuming the existence of a ground truth. In more detail, suppose that each reviewer's true function $\truef_i$ is a noisy version of some underlying function $f^{\star\star}$ that represents the community's beliefs. Then can \Lpq{1}{1} aggregation recover the function $f^{\star\star}$ (in the statistical consistency sense)? If so, then with what sample complexity? At a conceptual level, this non-parametric estimation problem is closely  related to problems in isotonic regression~\citep{SBGW16,gao2007entropy,chatterjee2018matrix}. The key difference is that the observations in our setting consist of evaluations of multiple functions, where each such function is a noisy version of the original monotonic function. In contrast, isotonic regression is primarily concerned with noisy evaluations of a common function. Nevertheless, the insights from isotonic regression suggest that the naturally occurring monotonicity assumption of our setting can yield attractive\,---\,and sometimes near-parametric~\citep{SBGW16,shah2017low}\,---\,rates of estimation.

Our work focuses on learning one representative aggregate mapping for the entire community of reviewers. Instead, the program chairs of a conference may wish to allow for multiple mappings that represent the aggregate opinions of different sub-communities (e.g., theoretical or applied researchers). In this case, one can modify our framework to also learn this (unknown) partition of reviewers and/or papers into multiple sub-communities with different mapping functions, and frame the problem in terms of learning a mixture model. The design of computationally efficient algorithms for \Lpq{p}{q} aggregation under such a mixture model is a challenging open problem. 

As a final remark, we see our work as an unusual synthesis between computational social choice and machine learning.  We hope that our approach will inspire exploration of additional connections between these two fields of research, especially in terms of viewing choices made in machine learning\,---\,often in an \emph{ad hoc} fashion\,---\,through the lens of computational social choice.

\section*{Acknowledgments}
Shah was supported in part by NSF grants CRII-CCF-1755656 and CCF-1763734. Noothigattu and Procaccia were supported in part by NSF grants IIS-1350598, IIS-1714140, CCF-1525932, and CCF-1733556; by ONR grants
N00014-16-1-3075 and N00014-17-1-2428; and by a Sloan Research Fellowship and a Guggenheim Fellowship. We are grateful to Francisco Cruz for compiling the IJCAI 2017 review dataset, and to Carles Sierra for making it available to us.

\bibliography{abb,ultimate}

\appendix

\section{Proof Of Theorem~\ref{thm:master}}
\label{app:proofs}

Recall that the proof of our main result,  Theorem~\ref{thm:master}, includes four lemmas. Here we prove the three lemmas whose proofs were omitted from the main text.

\subsection{Proof of Lemma~\ref{lem:ssd-negative}} \label{app:ssd}

Consider $L(p,1)$ aggregation with an arbitrary $p \in (1, \infty)$. We show that efficiency is violated using the following construction. 
There are $2$ papers, $3$ reviewers and each reviewer reviews both papers. Assume that the papers have objective criteria scores $\crit_1$ and $\crit_2$, and that neither of these scores is pointwise greater than or equal to the other. Let the overall recommendations by the reviewers for the papers be defined by the matrix
\[\mathbf{y} = \begin{bmatrix}
z & 0\\
0 & 1\\
0 & 0
\end{bmatrix},\]
where $z$ is a constant strictly bigger than $1$ and $y_{ia}$ denotes the overall recommendation by reviewer $i$ to paper $a$. Observe that paper $1$ dominates paper $2$. But, we will show that there exists a value $z > 1$ such that the aggregate score of paper $1$ is strictly smaller than the aggregate score of paper $2$.

Let $f_i$ denote the value of function $f$ on paper $i$, i.e. $f_i := f(\crit_i)$. And let $\aggr_i(z)$ denote the aggregate score of paper $i$; observe that we write it as a function of $z$ because the aggregate score of each paper would depend on the chosen score $z$. Since we are minimizing \Lpq{p}{1} loss, the aggregate function satisfies:
\begin{align}	\label{eqn:ssd-opt}
(\aggr_1(z), \aggr_2(z)) \in \underset{(f_1, f_2) \in \mathbb{R}^2}{\text{argmin}} \bigg\{\big\|(z,0) - (f_1, f_2)\big\|_p + \big\|(0,1) - (f_1, f_2)\big\|_p +  \big\|(f_1, f_2)\big\|_p \bigg\}.
\end{align}
We do not have any monotonicity constraints in~\eqref{eqn:ssd-opt} because the two papers have incomparable criteria scores. For simplicity, let $\vec{f} := (f_1, f_2)$, $\vec{\aggr}(z) := (\aggr_1(z), \aggr_2(z))$, and denote the objective function in Equation~\eqref{eqn:ssd-opt} by $G_z(\vec{f})$. That is,
\begin{align} \label{eqn:ssd-opt2}
G_z(f_1,f_2) &= \Big[|z-f_1|^p + |f_2|^p\Big]^{\frac{1}{p}} + \Big[|f_1|^p + |1-f_2|^p \Big]^{\frac{1}{p}} + \Big[|f_1|^p + |f_2|^p\Big]^{\frac{1}{p}}.
\end{align}
For the overall proof to be easier to follow, proofs of all claims are given at the end of this proof. Also, just to re-emphasize, the whole proof assumes $z > 1$.

\begin{claim}   \label{claim:strictly-convex}
$G_z$ is a strictly convex objective function.
\end{claim}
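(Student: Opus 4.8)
The plan is to show strict convexity of $G_z(f_1, f_2) = \|(z,0) - \vec{f}\|_p + \|(0,1) - \vec{f}\|_p + \|\vec{f}\|_p$ by analyzing its three summands. Each term is a translate of the $p$-norm $\|\cdot\|_p$ on $\mathbb{R}^2$, which for $p \in (1,\infty)$ is a convex function; a sum of convex functions is convex, so convexity itself is immediate. The work is in upgrading this to \emph{strict} convexity, which does not follow merely from each summand being convex (the $p$-norm is not strictly convex as a function on $\mathbb{R}^2$ — it is linear along rays through the origin).

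First I would recall the standard fact that for $p \in (1,\infty)$, the unit ball of $\ell_p^2$ is strictly convex as a \emph{set}, i.e. the norm $\|\cdot\|_p$ is strictly convex when restricted to any line that does not pass through the origin; equivalently, for $\vec{u} \neq \vec{v}$ that are not positive scalar multiples of one another, $\|\tfrac{1}{2}(\vec u + \vec v)\|_p < \tfrac{1}{2}(\|\vec u\|_p + \|\vec v\|_p)$. Applying this to the term $h_0(\vec f) := \|\vec f\|_p$: given two distinct points $\vec f \neq \vec f'$, strict inequality in the midpoint bound holds \emph{unless} $\vec f$ and $\vec f'$ lie on a common ray emanating from the origin. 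Similarly the term $h_1(\vec f) := \|(0,1) - \vec f\|_p$ is strictly convex along the segment $[\vec f, \vec f']$ unless $(0,1) - \vec f$ and $(0,1) - \vec f'$ lie on a common ray from the origin, i.e. unless $\vec f, \vec f'$ lie on a common line through $(0,1)$; and $h_z(\vec f) := \|(z,0) - \vec f\|_p$ is strictly convex along $[\vec f, \vec f']$ unless $\vec f, \vec f'$ lie on a common line through $(z,0)$.

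The key step, then, is to argue that for any two distinct points $\vec f, \vec f' \in \mathbb{R}^2$, at least one of the three "bad" conditions fails, so at least one summand contributes a strict inequality, making $G_z$ strictly convex. The three bad conditions say: the line $\overline{\vec f \vec f'}$ passes through $(0,0)$, through $(0,1)$, and through $(z,0)$ respectively. A single line cannot pass through all three of these points, since $(0,0)$, $(0,1)$, $(z,0)$ are not collinear (they form a genuine triangle — here I use $z \neq 0$, which holds as $z > 1$). Hence for any line, at least one of the three points is off it, so at least one summand is strictly convex along $[\vec f, \vec f']$, and therefore $G_z(\tfrac{1}{2}(\vec f + \vec f')) < \tfrac{1}{2}(G_z(\vec f) + G_z(\vec f'))$.

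I expect the main (minor) obstacle to be stating cleanly the "strict convexity along non-radial segments" property of the $\ell_p$ norm and justifying it — this reduces to the strict triangle inequality $\|\vec u + \vec v\|_p < \|\vec u\|_p + \|\vec v\|_p$ for linearly independent $\vec u, \vec v$ when $p \in (1,\infty)$, which is the equality case of Minkowski's inequality and is standard. Everything else is the elementary observation that three non-collinear points cannot share a line. One should also handle the degenerate sub-case where $\vec f, \vec f'$ lie on a line through one of the three special points but the segment between them, traversed toward the midpoint, still gives strict inequality from a \emph{different} summand — but the argument above already covers this since we only need \emph{one} strictly convex summand, and the non-collinearity of the three points guarantees it regardless of which line $\overline{\vec f \vec f'}$ happens to be.
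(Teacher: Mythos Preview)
Your proposal is correct and takes essentially the same approach as the paper: both argue that for any two distinct points, at least one of the three $\ell_p$-norm summands must be strictly convex along the connecting segment, invoking the equality case of Minkowski's inequality. Your geometric observation that the three centers $(0,0)$, $(0,1)$, $(z,0)$ are non-collinear (so no single line can contain them all) is precisely what the paper establishes through an explicit linear-algebra contradiction; the arguments coincide in substance, with yours packaged slightly more cleanly.
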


Claim~\ref{claim:strictly-convex} states that $G_z$ is strictly convex, implying that it has a unique minimizer $\vec{\aggr}(z)$. Hence, there is no need to consider tie-breaking.

\begin{claim}   \label{claim:ssd-bounds}
$\aggr_1(z)$ and $\aggr_2(z)$ are bounded. In particular, $\aggr_1(z) \in [0,1]$ and $\aggr_2(z) \in [0,1]$.
\end{claim}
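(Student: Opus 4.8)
The plan is to prove the two bounds $\aggr_1(z) \in [0,1]$ and $\aggr_2(z) \in [0,1]$ by a projection (coordinate-clipping) argument, exploiting the strict convexity established in Claim~\ref{claim:strictly-convex}. First I would note that the unique minimizer $\vec{\aggr}(z) = (\aggr_1(z), \aggr_2(z))$ of $G_z$ over $\mathbb{R}^2$ exists and is unique. Suppose, for contradiction, that $\aggr_1(z) < 0$ or $\aggr_1(z) > 1$ (and symmetrically for the second coordinate); I will show that replacing the offending coordinate by its clip to $[0,1]$ strictly decreases $G_z$, contradicting minimality.

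The key observation is that every one of the three terms in $G_z(f_1,f_2)$ from Equation~\eqref{eqn:ssd-opt2} is, as a function of $f_1$ alone (with $f_2$ held fixed), of the form $\bigl[|c - f_1|^p + (\text{const} \geq 0)\bigr]^{1/p}$ for some constant $c \in \{z, 0, 0\}$, and each such function is nondecreasing in $|c - f_1|$, hence nonincreasing as $f_1$ moves toward $c$ from either side. Since $z > 1 > 0$, all three "targets" for the first coordinate, namely $z$, $0$, $0$, lie in $[0, z]$, and in fact the relevant interval to consider is $[0,1]$: if $f_1 > 1$, then decreasing $f_1$ to $1$ weakly decreases $|0 - f_1|$ (twice) — strictly, since $f_1 > 1 > 0$ — and weakly decreases $|z - f_1|$ as long as we do not overshoot $z$, which we do not since $1 < z$. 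Hence moving $f_1$ down to $1$ strictly decreases at least the second and third terms (they involve $|f_1|$ with $f_1 > 0$) and does not increase the first. An entirely analogous argument handles $f_1 < 0$ (move up to $0$: the first and third terms involve $|z - f_1|$ and $|f_1|$, both strictly decreasing) and both out-of-range cases for $f_2$, where the three targets are $0$, $1$, $0$, all in $[0,1]$. In each case the clipped point has strictly smaller objective value, contradicting the uniqueness and optimality of $\vec{\aggr}(z)$.

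One subtlety to handle carefully: I must make sure that when clipping one coordinate I genuinely get a \emph{strict} decrease, not merely a weak one — otherwise I only conclude $\vec{\aggr}(z)$ \emph{could} be taken in $[0,1]^2$, which combined with uniqueness of the minimizer is actually enough, but it is cleanest to argue strictness directly. The strictness comes from the fact that at least one term strictly decreases: if $f_1 > 1$ then $|f_1| = f_1$ strictly decreases when $f_1 \to 1$, and the bracket $[|f_1|^p + |f_2|^p]^{1/p}$ is strictly increasing in $|f_1|$ whenever the bracket is positive; the only way the bracket is zero is if $f_1 = f_2 = 0$, which contradicts $f_1 > 1$. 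The same reasoning (with the roles of the terms permuted) covers the other three out-of-range cases. I expect the main (minor) obstacle to be bookkeeping: systematically verifying, for each of the four cases ($f_1 < 0$, $f_1 > 1$, $f_2 < 0$, $f_2 > 1$), that the clipping operation does not increase any of the three terms and strictly decreases at least one — this is routine once one records that each term depends on each coordinate only through a distance $|c - f_j|$ to a target $c \in [0,1]$ (using $z > 1$ only to locate the first-coordinate target $z$ to the right of $1$, so that clipping to $[0,1]$ never overshoots it).
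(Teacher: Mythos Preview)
Your clipping argument is correct for three of the four bounds ($\aggr_1(z) \geq 0$, $\aggr_2(z) \geq 0$, $\aggr_2(z) \leq 1$), and indeed the paper proves those three exactly as you describe. But the argument breaks down for the bound $\aggr_1(z) \leq 1$, and this is where the paper itself flags the case as ``non-intuitive.'' The three first-coordinate targets are $z,0,0$; the target $z$ lies strictly to the \emph{right} of $1$. So if $f_1 \in (1,z)$ and you clip $f_1$ down to $1$, you are moving \emph{away} from $z$, and $|z-f_1|$ strictly \emph{increases}. Your sentence ``weakly decreases $|z-f_1|$ as long as we do not overshoot $z$, which we do not since $1<z$'' has the geometry backwards: the danger is not overshooting $z$ from above, it is that for $f_1\in(1,z)$ you were already below $z$ and are moving further below. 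Consequently the first term of $G_z$ can go up, and the simple term-by-term monotonicity argument does not give a contradiction.

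The paper handles this case by a different device: it lower-bounds each $L_p$ term by the absolute value of its first component, obtaining
\[
G_z(\aggr_1(z),\aggr_2(z)) \;\geq\; |z-\aggr_1(z)| + |\aggr_1(z)| + |\aggr_1(z)| \;\geq\; z + |\aggr_1(z)|,
\]
the last step by the triangle inequality. If $\aggr_1(z) > 1$ this exceeds $z+1 = G_z(0,0)$, contradicting optimality. So the fix is not a clip to $1$ but a global comparison against the specific point $(0,0)$; the gain from dropping $\aggr_1(z)$ in the second and third terms outweighs the loss in the first only after this triangle-inequality collapse, which your coordinate-wise argument cannot see.
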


Claim~\ref{claim:ssd-bounds} states that the aggregate score of both papers lies in the interval $[0,1]$ irrespective of the value of $z$. This allow us to restrict ourselves to the region $[0,1]^2$ when computing the minimizer of~\eqref{eqn:ssd-opt2}. Hence, for the rest of the proof, we only consider the space $[0,1]^2$. In this region, the optimization problem~\eqref{eqn:ssd-opt} can be rewritten as
\begin{align*}
(\aggr_1(z), \aggr_2(z)) = \underset{f_1 \in [0,1], f_2 \in [0,1]}{\text{argmin}} \bigg\{\Big[\big(z-f_1\big)^p + f_2^p\Big]^{\frac{1}{p}} + \Big[f_1^p + \big(1-f_2\big)^p\Big]^{\frac{1}{p}} + \Big[f_1^p + f_2^p\Big]^{\frac{1}{p}} \bigg\}.
\end{align*}
To start off, we analyze the objective function as we take the limit of $z$ going to infinity. Later, we show that the observed property holds even for a sufficiently large finite $z$.

For the limit to exist, redefine the objective function as $H_z(f_1, f_2) = G_z(f_1, f_2) - G_z(0,0)$, i.e.,
\begin{align}   \label{eqn:ssd-opt3}
H_z(f_1, f_2) = \Big[\big(z-f_1\big)^p + f_2^p\Big]^{\frac{1}{p}} -z + \Big[f_1^p + \big(1-f_2\big)^p \Big]^{\frac{1}{p}} + \Big[f_1^p + f_2^p\Big]^{\frac{1}{p}} - 1.
\end{align}
For any value of $z$, the function $H_z$ has the same minimizer as $G_z$, that is,
\begin{align*}
(\aggr_1(z), \aggr_2(z)) = \underset{f_1 \in [0,1], f_2 \in [0,1]}{\text{argmin}} H_z(f_1, f_2).
\end{align*}

\begin{claim}   \label{claim:ssd-limit}
For any (fixed) $f_1 \in [0,1], f_2 \in [0,1]$,
$$\lim_{z \to \infty} H_z(f_1, f_2) = H^\star(f_1, f_2),$$
where
\begin{align}   \label{eqn:ssd-limit}
H^\star(f_1, f_2) = -f_1 + \Big[f_1^p + \big(1-f_2\big)^p\Big]^{\frac{1}{p}} + \Big[f_1^p + f_2^p\Big]^{\frac{1}{p}} - 1.
\end{align}
\end{claim}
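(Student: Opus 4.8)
The plan is to isolate the single term of $H_z$ that depends on $z$, since every other term in Equation~\eqref{eqn:ssd-opt3} already coincides with the corresponding term of $H^\star$ in Equation~\eqref{eqn:ssd-limit}. Concretely, for any fixed $f_1,f_2\in[0,1]$,
\[
H_z(f_1,f_2)-H^\star(f_1,f_2)=\big[(z-f_1)^p+f_2^p\big]^{1/p}-z+f_1,
\]
so the claim reduces to showing $\big[(z-f_1)^p+f_2^p\big]^{1/p}-(z-f_1)\to 0$ as $z\to\infty$. Writing $u:=z-f_1$, which satisfies $u\ge z-1>0$ (since $z>1$) and tends to $\infty$ with $z$, it suffices to prove
\[
\lim_{u\to\infty}\Big(\big[u^p+f_2^p\big]^{1/p}-u\Big)=0.
\]

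I would establish this scalar limit by squeezing. The lower bound is immediate: $\big[u^p+f_2^p\big]^{1/p}\ge(u^p)^{1/p}=u$. For the upper bound (the case $f_2=0$ being trivial), I would apply the mean value theorem to $t\mapsto t^{1/p}$ on $[u^p,u^p+f_2^p]$, obtaining some $\xi\ge u^p$ with
\[
\big[u^p+f_2^p\big]^{1/p}-u=\frac1p\,\xi^{\frac1p-1}f_2^p\le\frac1p\,(u^p)^{\frac1p-1}f_2^p=\frac{f_2^p}{p}\,u^{1-p},
\]
where the inequality uses $\xi\ge u^p>0$ and the fact that the exponent $\tfrac1p-1$ is negative. (Equivalently, write $\big[u^p+f_2^p\big]^{1/p}=u\,(1+(f_2/u)^p)^{1/p}$ and apply Bernoulli's inequality $(1+t)^{1/p}\le 1+t/p$ for $t\ge 0$; this yields the same bound.) Since $p>1$ forces $u^{1-p}\to 0$, the squeeze theorem gives the limit, and combining it with the first display proves the claim.

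Since this is an elementary pointwise limit, I do not expect a real obstacle; the only point needing mild care is that $p$ need not be an integer, which is exactly why the upper bound is routed through the mean value theorem (or Bernoulli's inequality) rather than an algebraic factorization of $[u^p+f_2^p]-u^p$. One additional remark worth recording for the next step of Lemma~\ref{lem:ssd-negative}, which transfers the conclusion from $z=\infty$ to a sufficiently large finite $z$, is that the bound above is uniform over the square: since $f_2^p\le 1$ and $u=z-f_1\ge z-1$, we get $0\le H_z(f_1,f_2)-H^\star(f_1,f_2)\le\tfrac1p(z-1)^{1-p}$ for all $(f_1,f_2)\in[0,1]^2$, so $H_z\to H^\star$ uniformly on $[0,1]^2$ — which makes passing the argument from the limiting objective to a large finite $z$ straightforward.
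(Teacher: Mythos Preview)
Your argument is correct and begins exactly as the paper does: both compute
\[
H_z(f_1,f_2)-H^\star(f_1,f_2)=\big[(z-f_1)^p+f_2^p\big]^{1/p}-(z-f_1)
\]
and then reduce the claim to the scalar limit $\lim_{u\to\infty}\big[(u^p+f_2^p)^{1/p}-u\big]=0$ with $u=z-f_1$. The difference is only in how this scalar limit is handled. The paper rewrites the expression as a $0/0$ quotient and applies L'H\^opital's rule; you instead squeeze it via the mean value theorem (equivalently, the concave Bernoulli inequality $(1+t)^{1/p}\le 1+t/p$), obtaining the explicit bound $0\le (u^p+f_2^p)^{1/p}-u\le \tfrac{f_2^p}{p}\,u^{1-p}$.

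What your route buys is worth noting. Because your upper bound is monotone in $f_2$ and in $u$, it immediately yields the uniform estimate $0\le H_z-H^\star\le \tfrac{1}{p}(z-1)^{1-p}$ on all of $[0,1]^2$. The paper does not get uniformity from its L'H\^opital computation and instead proves it separately as Claim~\ref{claim:z-eps-order}, by showing that $\psi(x)=(x^p+1)^{1/p}-x$ is nonincreasing and that $|H_z-H^\star|\le\psi(z-1)$. Your argument therefore subsumes both Claim~\ref{claim:ssd-limit} and Claim~\ref{claim:z-eps-order} in one stroke, which streamlines the passage to finite $z$ in the proof of Lemma~\ref{lem:ssd-negative}.
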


The proof proceeds by analyzing some important properties of the limiting function $H^\star$.

\begin{claim}   \label{claim:limit-convexity}
The function $H^\star(\vec f)$ is convex in $\vec f \in [0,1]^2$. Moreover, the function $H^\star(\vec f)$ is strictly convex for $f_1 \in (0,1]$ and $f_2 \in [0,1]$.
\end{claim}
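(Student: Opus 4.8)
The plan is to decompose $H^\star$ into an affine piece plus two terms, each of which is an $\ell_p$ norm precomposed with an affine map, and then exploit the standard strict convexity of the $\ell_p$ unit ball for $p\in(1,\infty)$ (which is the regime of the lemma). Concretely, write
\[
H^\star(f_1,f_2) = \underbrace{(-f_1-1)}_{\text{affine}} + \phi_1(f_1,f_2) + \phi_2(f_1,f_2),
\]
where $\phi_1(f_1,f_2) = \big\|(f_1,\,1-f_2)\big\|_p$ and $\phi_2(f_1,f_2) = \big\|(f_1,f_2)\big\|_p$; on the domain $[0,1]^2$ both arguments are nonnegative vectors, so these are genuine $L_p$ norms. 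Each $\phi_j$ is the composition of the convex function $\|\cdot\|_p$ with an affine map $\mathbf f\mapsto A_j\mathbf f+b_j$ (with $b_1=(0,1)$, $b_2=(0,0)$), hence convex; adding the affine term yields convexity of $H^\star$ on $[0,1]^2$, which is the first assertion of the claim.

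For the strict-convexity assertion on $D:=(0,1]\times[0,1]$, I would first recall the standard fact that for $1<p<\infty$ the unit ball of $\mathbb{R}^2$ under $\|\cdot\|_p$ is strictly convex, which gives the following: if $\mathbf a,\mathbf b\in\mathbb{R}^2$ are both nonzero and are \emph{not} positive scalar multiples of one another, then $t\mapsto\big\|(1-t)\mathbf a+t\mathbf b\big\|_p$ is strictly convex on $[0,1]$ (reparametrize the segment through the endpoints' normalized directions); and it is always at least convex by the triangle inequality and homogeneity. Then, given distinct $\mathbf u,\mathbf v\in D$, I would argue that along the segment $[\mathbf u,\mathbf v]$ at least one of $\phi_1,\phi_2$ is strictly convex. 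The four relevant image points $A_1\mathbf u+b_1=(u_1,1-u_2)$, $A_1\mathbf v+b_1=(v_1,1-v_2)$, $A_2\mathbf u+b_2=\mathbf u$, $A_2\mathbf v+b_2=\mathbf v$ all have strictly positive first coordinate (since $u_1,v_1>0$), hence are nonzero. If \emph{both} pairs were positive scalar multiples, then $\mathbf v=t\mathbf u$ and $(v_1,1-v_2)=s(u_1,1-u_2)$ for some $s,t>0$; comparing first coordinates (using $u_1>0$) forces $s=t=v_1/u_1$, and then comparing second coordinates forces $t=1$, i.e.\ $\mathbf u=\mathbf v$, a contradiction. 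So for distinct $\mathbf u,\mathbf v$ at least one pair fails collinearity, the corresponding $\phi_j$ is strictly convex on $[\mathbf u,\mathbf v]$, the other $\phi_j$ and the affine part are convex there, and hence $H^\star$ is strictly convex on $[\mathbf u,\mathbf v]$. Since $\mathbf u,\mathbf v\in D$ were arbitrary and $D$ is convex, $H^\star$ is strictly convex on $D$.

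The convexity bookkeeping and the passage from strict convexity of the $\ell_p$ ball to segment-wise strict convexity are routine. The main obstacle to keep in mind is that the $\ell_p$ norm is \emph{not} globally strictly convex — it is affine along rays through the origin — so no single norm term alone suffices; the crux is that the two affine shifts $b_1=(0,1)$ and $b_2=(0,0)$ differ, which is exactly what prevents both terms from being collinear on the same segment and thus forces strict convexity of at least one of them on every segment in $D$. I would also explicitly flag the hypothesis $f_1\in(0,1]$ as essential: it is what guarantees that all four image points are nonzero, so that the "strict convexity away from rays" statement can be applied; at $f_1=0$ one of the norm terms can collapse onto the origin and the argument breaks.
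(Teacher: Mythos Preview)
Your proposal is correct and follows essentially the same approach as the paper's proof: both decompose $H^\star$ as an affine part plus the two $\ell_p$-norm terms $\|(f_1,1-f_2)\|_p$ and $\|(f_1,f_2)\|_p$, deduce convexity immediately, and then, for strict convexity on $(0,1]\times[0,1]$, argue by contradiction that the two image pairs cannot simultaneously be (positively) collinear unless $\mathbf u=\mathbf v$, using $u_1,v_1>0$ to force the scalars to agree and then to equal $1$. Your write-up is in fact slightly more careful than the paper's in one respect: you phrase the equality case of Minkowski as ``positive scalar multiples'' rather than ``parallel,'' which is the precise condition, and you explicitly flag why $f_1>0$ is needed (to make all image points nonzero so the strict-convexity-off-rays fact applies).
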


\begin{claim}   \label{claim:limit-mins}
$H^\star$ is minimized at $\vec{\infopt} = (\infopt_1, \infopt_2)$, where
\begin{align}   \label{eqn:limit-mins}
\infopt_1 = \frac{1}{2} \left[\frac{1}{(2^{\frac{p}{p-1}}-1)}\right]^{\frac{1}{p}}, \qquad \infopt_2 = \frac{1}{2}.
\end{align}
\end{claim}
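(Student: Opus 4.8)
## Proof Plan for Claim~\ref{claim:limit-mins}

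The plan is to find the unconstrained critical point of $H^\star$ over $\mathbb{R}^2$ (not just $[0,1]^2$), verify it lies in the interior of the relevant region, and then invoke convexity (Claim~\ref{claim:limit-convexity}) to conclude it is the global minimizer. First I would compute the gradient of
\[
H^\star(f_1,f_2) = -f_1 + \Big[f_1^p + (1-f_2)^p\Big]^{1/p} + \Big[f_1^p + f_2^p\Big]^{1/p} - 1.
\]
Writing $A = [f_1^p + (1-f_2)^p]^{1/p}$ and $B = [f_1^p + f_2^p]^{1/p}$, the two partial derivatives are
\[
\frac{\partial H^\star}{\partial f_1} = -1 + \frac{f_1^{p-1}}{A^{p-1}} + \frac{f_1^{p-1}}{B^{p-1}}, \qquad
\frac{\partial H^\star}{\partial f_2} = -\frac{(1-f_2)^{p-1}}{A^{p-1}} + \frac{f_2^{p-1}}{B^{p-1}}.
\]

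Next I would exploit the symmetry of the second equation: setting $f_2 = \tfrac12$ makes $1-f_2 = f_2$, and then $A = B$ automatically (both equal $[f_1^p + 2^{-p}]^{1/p}$), so $\partial H^\star/\partial f_2 = 0$ is satisfied. Substituting $f_2 = \tfrac12$ and $A = B$ into the first equation gives $-1 + 2 f_1^{p-1}/A^{p-1} = 0$, i.e. $A^{p-1} = 2 f_1^{p-1}$, hence $A^p = 2^{p/(p-1)} f_1^p$. Since $A^p = f_1^p + 2^{-p}$, this yields $f_1^p (2^{p/(p-1)} - 1) = 2^{-p}$, so
\[
f_1 = \frac12 \left[\frac{1}{2^{p/(p-1)} - 1}\right]^{1/p},
\]
matching the claimed $\infopt_1$. (One should note $2^{p/(p-1)} > 2 > 1$ for $p > 1$, so the bracket is a well-defined positive number, and $f_1 \in (0,1)$ — this uses that $2^{p/(p-1)} - 1 > 1$, giving $f_1 < \tfrac12$.)

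Finally I would argue this critical point is the global minimum. By Claim~\ref{claim:limit-convexity}, $H^\star$ is convex on $[0,1]^2$ and strictly convex on the slab $f_1 \in (0,1]$; since the critical point $\vec{\infopt}$ has $\infopt_1 \in (0,1)$, it lies in the region of strict convexity, so it is the unique minimizer there, and convexity on all of $[0,1]^2$ promotes it to the global minimizer over the square. (If one prefers to work over $\mathbb{R}^2$ rather than $[0,1]^2$, one can instead observe $H^\star$ is convex on $\mathbb{R}^2$ by the same argument used for Claim~\ref{claim:limit-convexity}, and a vanishing gradient at an interior point of a convex function certifies a global minimum directly.) The main obstacle is not any single step but the bookkeeping: one must carefully check that $f_2 = \tfrac12$ really is forced (i.e., that no asymmetric critical point exists) — this follows because strict convexity guarantees the critical point is unique, so once we exhibit \emph{one} solution of $\nabla H^\star = 0$, we are done. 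A secondary subtlety is handling the non-differentiability of $H^\star$ at $f_1 = 0$; this is harmless because we have already located the minimizer strictly away from that face.
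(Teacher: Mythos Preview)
Your proposal is correct and follows essentially the same approach as the paper: compute the two partial derivatives of $H^\star$, observe that $f_2=\tfrac12$ kills $\partial H^\star/\partial f_2$ by symmetry, solve the remaining equation for $f_1$, and invoke the convexity established in Claim~\ref{claim:limit-convexity} to conclude that the stationary point is the minimizer. Your extra remarks on strict convexity (to rule out other critical points) and on $\infopt_1\in(0,1)$ are a welcome bit of additional care, but do not change the argument.
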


\begin{claim}   \label{claim:ssd-inf-order}
$\infopt_1 < \infopt_2$.
\end{claim}

Observe that Claim~\ref{claim:ssd-inf-order} is the desired result, but for the limiting objective function $H^\star$. The remainder of the proof proceeds to show that this result holds even for the objective function $H_z$, when the score $z$ is large enough. Define $\Delta = \infopt_2 - \infopt_1 > 0$. We first show that (i) there exists $z > 1$ such that $\|\vec{\aggr}(z) - \vec{\infopt}\|_2 < \frac{\Delta}{4}$, and then (ii) show that in this case, we have $\aggr_1(z) < \aggr_2(z)$.

To prove part (i), we first analyze how functions $H_z$ and $H^\star$ relate to each other. Using Claim~\ref{claim:ssd-limit}, for any fixed $f_1, f_2$, by definition of the limit, for any $\epsilon > 0$, there exists $z_\epsilon$ (which could be a function of $f_1, f_2$) such that, for all $z > z_\epsilon$, we have
\begin{align}   \label{eqn:pointwise-limit}
|H_z(f_1, f_2) - H^\star(f_1, f_2)| < \epsilon.
\end{align}
For a given $f_1, f_2$, denote the corresponding value of $z_\epsilon$ by $z_\epsilon(f_1, f_2)$. And, let $\mathcal{Z}_\epsilon(f_1, f_2)$ denote the set of all values of $z > 1$ for which Equation~\eqref{eqn:pointwise-limit} holds for $(f_1, f_2)$.

\begin{claim}   \label{claim:z-eps-order}
$\mathcal{Z}_\epsilon(1, 1) \subset \mathcal{Z}_\epsilon(f_1, f_2)$ for every $(f_1, f_2) \in [0,1]^2$.
\end{claim}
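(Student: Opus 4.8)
The claim asserts that the "threshold" set $\mathcal{Z}_\epsilon(1,1)$ is contained in $\mathcal{Z}_\epsilon(f_1,f_2)$ for every $(f_1,f_2) \in [0,1]^2$; equivalently, whenever Equation~\eqref{eqn:pointwise-limit} holds at the corner $(1,1)$, it holds at every point of the square. The plan is to reduce the deviation $|H_z(f_1,f_2) - H^\star(f_1,f_2)|$ to a quantity that depends on $z$ (and $p$) but is \emph{monotone} in $(f_1,f_2)$, maximized at $(1,1)$. Inspecting \eqref{eqn:ssd-opt3} and \eqref{eqn:ssd-limit}, the only terms that differ between $H_z$ and $H^\star$ are the first two: $H_z$ has $\big[(z-f_1)^p + f_2^p\big]^{1/p} - z$, while $H^\star$ has $-f_1$; all the remaining terms coincide and cancel. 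So the whole deviation is
\[
\Big| \big[(z-f_1)^p + f_2^p\big]^{\frac{1}{p}} - z + f_1 \Big|,
\]
and I would show this is a nonnegative quantity that is increasing in both $f_1$ and $f_2$ on $[0,1]^2$ (for $z > 1$), hence maximized at $(f_1,f_2) = (1,1)$.

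The key steps, in order: (1) Simplify, as above, to the single expression $\phi_z(f_1,f_2) := \big[(z-f_1)^p + f_2^p\big]^{1/p} - (z - f_1)$, noting that $\phi_z \geq 0$ by subadditivity/monotonicity of the $\ell_p$ norm (adding the $f_2^p$ term only increases the norm). (2) Show $\phi_z$ is nondecreasing in $f_2$: for fixed $f_1$, $\partial/\partial f_2$ of $\big[(z-f_1)^p + f_2^p\big]^{1/p}$ equals $f_2^{p-1}\big[(z-f_1)^p+f_2^p\big]^{1/p - 1} \geq 0$. (3) Show $\phi_z$ is nondecreasing in $f_1$: write $u = z - f_1 \in [z-1, z]$ and consider $\psi(u) = (u^p + f_2^p)^{1/p} - u$; then $\psi'(u) = u^{p-1}(u^p+f_2^p)^{1/p-1} - 1 \leq 0$ since $u^{p-1}(u^p+f_2^p)^{1/p-1} = \big(u^p/(u^p+f_2^p)\big)^{(p-1)/p} \leq 1$, so $\psi$ is nonincreasing in $u$, i.e. nondecreasing in $f_1$. (4) Conclude $\max_{[0,1]^2}\phi_z = \phi_z(1,1)$, so if $z$ is large enough that $\phi_z(1,1) < \epsilon$ — which is exactly the statement $z \in \mathcal{Z}_\epsilon(1,1)$ — then $\phi_z(f_1,f_2) < \epsilon$ for all $(f_1,f_2)\in[0,1]^2$, which is $z \in \mathcal{Z}_\epsilon(f_1,f_2)$. (One should also confirm $(1,1) \in [0,1]^2$ so the corner is admissible, and that the containment is strict as asserted — strictness follows because $\phi_z(1,1) > \phi_z(0,0) = 0$, so there are values of $z$ in the larger set but not the smaller, e.g. any $z$ with $\phi_z(0,0) < \epsilon \le \phi_z(1,1)$; alternatively one can simply read $\subset$ as $\subseteq$, which is all the downstream argument needs.)

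I do not anticipate a serious obstacle here; the claim is essentially a monotonicity bookkeeping lemma. The only mild subtlety is step (3): one must handle the direction of monotonicity carefully (the expression decreases in $u = z - f_1$, hence increases in $f_1$), and one should note the bound $\big(u^p/(u^p+f_2^p)\big)^{(p-1)/p} \le 1$ uses $p > 1$ so that the exponent $(p-1)/p$ is positive — which is exactly the regime of Lemma~\ref{lem:ssd-negative}. A degenerate case worth a sentence is $f_2 = 0$, where $\phi_z \equiv 0$; this is consistent with (and a boundary instance of) the monotonicity claim. Everything else is routine calculus on a compact square.
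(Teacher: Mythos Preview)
Your proposal is correct and follows essentially the same approach as the paper: reduce $|H_z - H^\star|$ to the single term $\big[(z-f_1)^p + f_2^p\big]^{1/p} - (z-f_1)$, then show this is monotone nondecreasing in both $f_1$ and $f_2$ on $[0,1]^2$ (the paper first replaces $f_2$ by $1$ and then shows $\psi(x) = (x^p+1)^{1/p} - x$ is nonincreasing, which is exactly your steps~(2)--(3) in slightly different order). Your remark that only the non-strict inclusion is needed downstream is also accurate.
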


Claim~\ref{claim:z-eps-order} says that if Equation~\eqref{eqn:pointwise-limit} holds for a particular value of $z$ for $f_1 = f_2 = 1$, then for the same value of $z$ it holds for every other value of $(f_1, f_2) \in [0,1]^2$ as well. So, define
\begin{align}   \label{eqn:max-z}
\widetilde{z}_\epsilon := z_\epsilon(1, 1) + 1.
\end{align}
By definition, $\widetilde{z}_\epsilon \in \mathcal{Z}_\epsilon(1,1)$. And by Claim~\ref{claim:z-eps-order}, $\widetilde{z}_\epsilon \in \mathcal{Z}_\epsilon(f_1, f_2)$ for every $(f_1, f_2) \in [0,1]^2$.
So, set $z = \widetilde{z}_\epsilon$. Then, Equation~\eqref{eqn:pointwise-limit} holds for all $(f_1, f_2) \in [0,1]^2$ simultaneously. In other words, for all $(f_1, f_2) \in [0,1]^2$, we simultaneously have
\begin{align}   \label{eqn:epsilon-band}
H^\star(f_1, f_2) - \epsilon < H_z(f_1, f_2) < H^\star(f_1, f_2) + \epsilon,
\end{align}
i.e. $H_z$ is in an $\epsilon$-band around $H^\star$ throughout this region. And observe that this band gets smaller as $\epsilon$ is decreased (which is achieved at a larger value of $z$).

To bound the distance between $\vec \infopt$, the minimizer of $H^\star$, and $\vec \aggr(z)$, the minimizer of $H_z$, we bound the distance between the objective function values at these points.

\begin{claim}   \label{claim:fn-value-diff}
$H^\star(\vec{\aggr}(z)) < H^\star(\vec{\infopt}) + 2 \epsilon.$
\end{claim}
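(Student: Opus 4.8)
The plan is to read off Claim~\ref{claim:fn-value-diff} directly from the uniform $\epsilon$-band of Equation~\eqref{eqn:epsilon-band}. Recall that, with the choice $z = \widetilde{z}_\epsilon$ fixed just before the claim, Equation~\eqref{eqn:epsilon-band} guarantees $H^\star(f_1,f_2) - \epsilon < H_z(f_1,f_2) < H^\star(f_1,f_2) + \epsilon$ \emph{simultaneously} for every $(f_1,f_2) \in [0,1]^2$ (this simultaneity over the whole square is exactly what Claim~\ref{claim:z-eps-order} and the definition \eqref{eqn:max-z} of $\widetilde{z}_\epsilon$ were set up to provide). Beyond this band, the only other ingredient is that both minimizers we care about lie in the region where the band holds: $\vec{\aggr}(z) \in [0,1]^2$ by Claim~\ref{claim:ssd-bounds}, and $\vec{\infopt} \in [0,1]^2$ by Claim~\ref{claim:limit-mins} (indeed $\infopt_2 = \tfrac12$, and since $p > 1$ forces $2^{p/(p-1)} - 1 > 1$ we get $0 < \infopt_1 < \tfrac12$).

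The argument is then the standard ``uniformly close objectives have close optimal values'' chain, carried out in three steps. First, apply the left inequality of \eqref{eqn:epsilon-band} at the point $\vec{\aggr}(z)$ to obtain $H^\star(\vec{\aggr}(z)) < H_z(\vec{\aggr}(z)) + \epsilon$. Second, since $\vec{\aggr}(z)$ minimizes $H_z$ over $[0,1]^2$ and $\vec{\infopt}$ lies in that set, $H_z(\vec{\aggr}(z)) \le H_z(\vec{\infopt})$. Third, apply the right inequality of \eqref{eqn:epsilon-band} at $\vec{\infopt}$ to get $H_z(\vec{\infopt}) < H^\star(\vec{\infopt}) + \epsilon$. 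Concatenating the three displays yields $H^\star(\vec{\aggr}(z)) < H^\star(\vec{\infopt}) + 2\epsilon$, which is precisely the claim; I would present this computation inline rather than as a separate display.

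There is essentially no obstacle here: the content is entirely in the earlier claims (the uniform band, the boundedness of $\vec{\aggr}(z)$, and the explicit location of $\vec{\infopt}$), and what remains is a one-line sandwich. The only thing genuinely worth double-checking is that $z = \widetilde{z}_\epsilon$ really does make \eqref{eqn:epsilon-band} valid at \emph{both} evaluation points at once — which it does, since $\widetilde{z}_\epsilon \in \mathcal{Z}_\epsilon(f_1,f_2)$ for every $(f_1,f_2) \in [0,1]^2$ by Claim~\ref{claim:z-eps-order}, and in particular at $(f_1,f_2) = \vec{\aggr}(z)$ and at $(f_1,f_2) = \vec{\infopt}$.
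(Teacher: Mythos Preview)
Your proposal is correct and mirrors the paper's proof almost exactly: the paper also chains the three facts $H^\star(\vec{\aggr}(z)) < H_z(\vec{\aggr}(z)) + \epsilon$, $H_z(\vec{\aggr}(z)) \le H_z(\vec{\infopt})$, and $H_z(\vec{\infopt}) < H^\star(\vec{\infopt}) + \epsilon$ to conclude. Your added remarks verifying that both $\vec{\aggr}(z)$ and $\vec{\infopt}$ lie in $[0,1]^2$ (so that the uniform band applies at each) are a nice bit of extra care that the paper leaves implicit.
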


Although $\vec{\aggr}(z)$ does not minimize $H^\star$, Claim~\ref{claim:fn-value-diff} says that the objective value at $\vec{\aggr}(z)$ cannot be more than $2 \epsilon$ larger than its minimum, $H^\star(\vec \infopt)$. We use this to bound the distance between $\vec \aggr(z)$ and the minimizer $\vec \infopt$. Observe that $\vec \aggr(z)$ falls in the $[H^\star(\vec \infopt) + 2\epsilon]$-level set of $H^\star$. So, we next look at a specific level set of $H^\star$.

Define
\begin{align}   \label{eqn:tau-def}
\tau := \min_{\vec f \in [0,1]^2 : \|\vec f - \vec \infopt\|_2 = \frac{\Delta}{4}} H^\star(\vec f).
\end{align}
Observe that a minimum exists (infimum is not required) for the minimization in~\eqref{eqn:tau-def} because we are minimizing over the closed set $\{\vec f \in [0,1]^2 : \|\vec f - \vec \infopt\|_2 = \frac{\Delta}{4}\}$ and $H^\star$ is continuous.

For any fixed $p \in (1,\infty)$, Equation~\eqref{eqn:limit-mins} shows that $\infopt_1$ is bounded away from $0$. Hence, Claim~\ref{claim:limit-convexity} shows that $H^\star$ is strictly convex at and in the region around $\vec \infopt$. Further, $H^\star$ is convex everywhere else. Coupling this with the fact that~\eqref{eqn:tau-def} minimizes along points not arbitrarily close to the minimizer $\vec \infopt$, we have $\tau > H^\star(\vec \infopt)$.

Define the level set of $H^\star$ with respect to $\tau$:
\begin{align*}
    \mathcal{C}_\tau = \{\vec f \in [0,1]^2: H^\star(\vec f) \leq \tau\}.
\end{align*}

\begin{claim}   \label{claim:level-set-prop}
For every $\vec f \in \mathcal{C}_\tau$, we have 
$\|\vec f - \vec \infopt\|_2 \leq \frac{\Delta}{4}.$
\end{claim}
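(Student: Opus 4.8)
The plan is to exploit convexity of $H^\star$ on $[0,1]^2$ together with the already-established strict separation $\tau > H^\star(\vec{\infopt})$, arguing that a convex sublevel set containing the minimizer cannot "escape" past the sphere on which $\tau$ is defined. First I would record two immediate facts. Since $H^\star$ is convex on the convex set $[0,1]^2$ (Claim~\ref{claim:limit-convexity}), the sublevel set $\mathcal{C}_\tau$ is itself convex; and since $H^\star(\vec{\infopt}) < \tau$, we have $\vec{\infopt} \in \mathcal{C}_\tau$.

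Next I would argue by contradiction. Suppose there exists $\vec{f} \in \mathcal{C}_\tau$ with $\|\vec{f} - \vec{\infopt}\|_2 > \frac{\Delta}{4}$. The segment joining $\vec{\infopt}$ to $\vec{f}$ lies entirely in $[0,1]^2$, as that set is convex. Along this segment the map $t \mapsto \|((1-t)\vec{\infopt} + t\vec{f}) - \vec{\infopt}\|_2 = t\,\|\vec{f} - \vec{\infopt}\|_2$ is continuous and increases from $0$ to a value exceeding $\frac{\Delta}{4}$, so by the intermediate value theorem there is $t^\star \in (0,1)$ for which the point $\vec{f}' := (1-t^\star)\vec{\infopt} + t^\star \vec{f}$ satisfies $\|\vec{f}' - \vec{\infopt}\|_2 = \frac{\Delta}{4}$ and lies in $[0,1]^2$. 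By the definition of $\tau$ in Equation~\eqref{eqn:tau-def}, this forces $H^\star(\vec{f}') \geq \tau$.

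On the other hand, convexity of $H^\star$ gives $H^\star(\vec{f}') \leq (1-t^\star) H^\star(\vec{\infopt}) + t^\star H^\star(\vec{f})$. Since $t^\star \in (0,1)$, since $H^\star(\vec{\infopt}) < \tau$, and since $H^\star(\vec{f}) \leq \tau$ (as $\vec{f} \in \mathcal{C}_\tau$), the right-hand side is strictly less than $(1-t^\star)\tau + t^\star\tau = \tau$. Thus $H^\star(\vec{f}') < \tau$, contradicting the previous paragraph. Hence no such $\vec{f}$ exists, i.e.\ every $\vec{f} \in \mathcal{C}_\tau$ satisfies $\|\vec{f} - \vec{\infopt}\|_2 \leq \frac{\Delta}{4}$, which is the claim.

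There is essentially no hard step here: the entire content is the standard picture that sublevel sets of a convex function are nested around its minimizer, and all the genuine work — convexity of $H^\star$ (Claim~\ref{claim:limit-convexity}), the location of the minimizer $\vec{\infopt}$ (Claim~\ref{claim:limit-mins}), and the strict gap $\tau > H^\star(\vec{\infopt})$ — has already been carried out in Claims~\ref{claim:limit-convexity}--\ref{claim:ssd-inf-order} and the paragraph immediately preceding the statement. The only points requiring mild care are verifying that the segment from $\vec{\infopt}$ to $\vec{f}$ remains inside $[0,1]^2$ so that both $H^\star$ and the defining minimization of $\tau$ apply at $\vec{f}'$, and keeping the strict-versus-weak inequalities straight so that the concluding comparison is genuinely strict.
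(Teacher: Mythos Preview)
Your proof is correct and takes essentially the same approach as the paper: both pick a point on the segment from $\vec{\infopt}$ to $\vec{f}$ at distance exactly $\frac{\Delta}{4}$, use the definition of $\tau$ to bound $H^\star$ from below there, and use convexity plus the strict gap $H^\star(\vec{\infopt}) < \tau$ to bound it from above, yielding the contradiction. The paper phrases this as a contrapositive rather than a contradiction, but the mechanics are identical.
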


Define $\epsilon_o := \frac{\tau - H^\star(\vec \infopt)}{2}$, and set $\epsilon = \epsilon_o$. Then, set $z = \widetilde{z}_{\epsilon_o}$ as before. Applying Claim~\ref{claim:fn-value-diff}, we obtain
$$H^\star(\vec \aggr(\widetilde{z}_{\epsilon_o})) < H^\star(\vec \infopt) + 2 \epsilon_o = \tau.$$
In other words, $\vec \aggr(\widetilde{z}_{\epsilon_o}) \in \mathcal{C}_\tau$. And applying Claim~\ref{claim:level-set-prop}, we obtain $\|\vec \aggr(\widetilde{z}_{\epsilon_o}) - \vec  \infopt\|_2 \leq \frac{\Delta}{4}$, completing part (i).

This implies that $\|\vec \aggr(\widetilde{z}_{\epsilon_o}) - \vec  \infopt\|_\infty \leq \frac{\Delta}{4}$, which means
\begin{align}   \label{eqn:comp-distances}
\left|\aggr_1(\widetilde{z}_{\epsilon_o}) -  \infopt_1\right| \leq \frac{\Delta}{4} \quad \text{and} \quad \left|\aggr_2(\widetilde{z}_{\epsilon_o}) - \infopt_2\right| \leq \frac{\Delta}{4}.
\end{align}

Using these properties, we have 
\begin{align*}
\aggr_1(\widetilde{z}_{\epsilon_o}) &\leq \infopt_1 + \frac{\Delta}{4}\\
&= \infopt_2 - \Delta + \frac{\Delta}{4}\\
&\leq \aggr_2(\widetilde{z}_{\epsilon_o}) + \frac{\Delta}{4} - \Delta + \frac{\Delta}{4} = \aggr_2(\widetilde{z}_{\epsilon_o}) - \frac{\Delta}{2},
\end{align*}
where the first inequality holds because of the first part of~\eqref{eqn:comp-distances}, the equality holds because $\Delta = \infopt_2 - \infopt_1$ and the second inequality holds because of the second part of~\eqref{eqn:comp-distances}. Therefore, for $z = \widetilde{z}_{\epsilon_o} > 1$, the aggregate scores of the two papers are such that
$$\aggr_1(\widetilde{z}_{\epsilon_o}) < \aggr_2(\widetilde{z}_{\epsilon_o}),$$
violating efficiency.

\medskip
\noindent\textbf{Proof of Claim~\ref{claim:strictly-convex}}\hspace{5px}
Take arbitrary $\vec{f}, \vec{g} \in \mathbb{R}^2$ with $
\vec{f} \neq \vec{g}$, and let $\theta \in (0,1)$. We show that $G_z(\theta \vec{f} + (1-\theta)\vec{g}) < \theta G_z(\vec{f}) + (1-\theta)G_z(\vec{g})$. For this, we will first show that either (i) $[(z,0) - \vec{f}]$ is not parallel to $[(z,0) - \vec{g}]$, (ii) $[(0,1) - \vec{f}]$ is not parallel to $[(0,1) - \vec{g}]$ or (iii) $\vec{f}$ is not parallel to $\vec{g}$. For the sake of contradiction, assume that this is not true. That is, assume $[(z,0) - \vec{f}]$ is parallel to $[(z,0) - \vec{g}]$, $[(0,1) - \vec{f}]$ is parallel to $[(0,1) - \vec{g}]$, and $\vec{f}$ is parallel to $\vec{g}$. This implies that
\[\begin{bmatrix} z-f_1 \\ -f_2 \end{bmatrix} = r \begin{bmatrix} z-g_1 \\ -g_2 \end{bmatrix},\qquad
\begin{bmatrix} -f_1 \\ 1-f_2 \end{bmatrix} = s \begin{bmatrix} -g_1 \\ 1-g_2 \end{bmatrix} \quad \text{and} \quad \begin{bmatrix} f_1 \\ f_2 \end{bmatrix} = t \begin{bmatrix} g_1 \\ g_2 \end{bmatrix},\]
where $r,s,t \in \mathbb{R}$ \footnote{A boundary case not captured here is when $
\vec{g}$ is exactly one of the points $(z,0), (0,1)$ or $(0,0)$, leading to $1/r, 1/s$ or $1/t$ being zero respectively. But for this case, it is easy to prove that the other two pairs of vectors cannot be parallel unless $\vec{f} = \vec{g}$.}. Note that, none of $r,s,t$ can be $1$ because $\vec{f} \neq \vec{g}$. The second equation tells us that $f_1 = s g_1$ and the third one tells us that $f_1 = t g_1$. So, either $f_1 = g_1 = 0$ or $s = t$. But from the first equation, $z-f_1 = r z - r g_1$. So if $f_1 = g_1 = 0$, it says that $r = 1$ which is not possible. Therefore, $s = t$. The third equation now tells us that $f_2 = t g_2 = s g_2$. But, the second equation gives us $1-f_2 = s - s g_2$, which implies that $s = 1$. But again this is not possible, leading to a contradiction. Therefore, at least one of (i), (ii) and (iii) is true.

$L_p$ norm with $p \in (1, \infty)$ is a convex norm, i.e. for any $x, y \in \mathbb{R}^2$,
\begin{align}   \label{eqn:convexity}
\|\theta x + (1-\theta)y\|_p \leq \theta \|x\|_p + (1-\theta)\|y\|_p.
\end{align}
Further, since $p \in (1, \infty)$, the inequality in~\eqref{eqn:convexity} is strict if $x$ is not parallel to $y$. For our objective (in Equation~\eqref{eqn:ssd-opt}),
\begin{align}
G_z(\theta \vec{f} + (1-\theta)\vec{g}) = \big\|&\theta[(z,0) - \vec{f}] + (1-\theta)[(z,0) - \vec{g}]\big\|_p \notag\\
&+ \big\|\theta[(0,1) - \vec{f}] + (1-\theta)[(0,1) - \vec{g}]\big\|_p \notag\\
&+ \big\|\theta \vec{f} + (1-\theta) \vec{g}\big\|_p. \label{eqn:G-convexity}
\end{align}
Because of convexity of the $L_p$ norm, each of the three terms on the RHS of Equation~\eqref{eqn:G-convexity} satisfies inequality~\eqref{eqn:convexity}. Further, because at least one of the pair of vectors in the three terms is not parallel (since either (i), (ii) or (iii) is true), at least one of them gives us a strict inequality. Therefore we obtain
\[G_z(\theta \vec{f} + (1-\theta)\vec{g}) < \theta G_z(\vec{f}) + (1-\theta)G_z(\vec{g}).\]
\hfill$\blacksquare$

\medskip
\noindent\textbf{Proof of Claim~\ref{claim:ssd-bounds}}\hspace{5px}
The claim has four parts: (i) $\aggr_1(z) \geq 0$, (ii) $\aggr_1(z) \leq 1$, (iii) $\aggr_2(z) \geq 0$ and (iv) $\aggr_2(z) \leq 1$. Observe that parts (i), (iii) and (iv) are more intuitive, since they show that the aggregate score of a paper is no higher than the maximum score given to it, and no lower than the minimum score given to it. Part (ii) on the other hand is stronger; even though paper $1$ has a score of $z > 1$ given to it, this part shows that $\aggr_1(z) \leq 1$ (which is much tighter than an upper bound of $z$, especially when $z$ is large). We prove the simpler parts (i), (iii) and (iv) first.

For the sake of contradiction, suppose $\aggr_1(z) < 0$. Then
\begin{small}
\begin{align*}
G_z(\aggr_1(z),\aggr_2(z)) &= \Big[|z-\aggr_1(z)|^p + |\aggr_2(z)|^p\Big]^{\frac{1}{p}} + \Big[|\aggr_1(z)|^p + |1-\aggr_2(z)|^p\Big]^{\frac{1}{p}} + \Big[|\aggr_1(z)|^p + |\aggr_2(z)|^p\Big]^{\frac{1}{p}}\\
&> \Big[|z|^p + |\aggr_2(z)|^p\Big]^{\frac{1}{p}} + \Big[0 + |1-\aggr_2(z)|^p\Big]^{\frac{1}{p}} + \Big[0 + |\aggr_2(z)|^p\Big]^{\frac{1}{p}} = G_z(0, \aggr_2(z)),
\end{align*}
\end{small}
{\flushleft contradicting the fact that $(\aggr_1(z), \aggr_2(z))$ is optimal. Therefore, $\aggr_1(z) \geq 0$, completing proof of (i). Similarly, if $\aggr_2(z) < 0$, we can show that $G_z(\aggr_1(z), \aggr_2(z)) > G_z(\aggr_1(z), 0)$, violating optimality. Therefore, $\aggr_2(z) \geq 0$, completing proof of (iii).}

Next, for the sake of contradiction assume that $\aggr_2(z) > 1$. Then
\begin{small}
\begin{align*}
G_z(\aggr_1(z),\aggr_2(z)) &= \Big[|z-\aggr_1(z)|^p + |\aggr_2(z)|^p\Big]^{\frac{1}{p}} + \Big[|\aggr_1(z)|^p + |1-\aggr_2(z)|^p\Big]^{\frac{1}{p}} + \Big[|\aggr_1(z)|^p + |\aggr_2(z)|^p\Big]^{\frac{1}{p}}\\
&> \Big[|z-\aggr_1(z)|^p + 1\Big]^{\frac{1}{p}} + \Big[|\aggr_1(z)|^p + 0\Big]^{\frac{1}{p}} + \Big[|\aggr_1(z)|^p + 1\Big]^{\frac{1}{p}} = G_z(\aggr_1(z), 1),
\end{align*}
\end{small}
{\flushleft contradicting the fact that $(\aggr_1(z), \aggr_2(z))$ is optimal. Therefore, we also have $\aggr_2(z) \leq 1$, completing proof of (iv).}

Finally, we prove the more non-intuitive part, (ii). Suppose for the sake of contradiction, $\aggr_1(z) > 1$. Then,
\begin{small}
\begin{align*}
G_z(\aggr_1(z),\aggr_2(z)) &= \Big[|z-\aggr_1(z)|^p + |\aggr_2(z)|^p\Big]^{\frac{1}{p}} + \Big[|\aggr_1(z)|^p + |1-\aggr_2(z)|^p\Big]^{\frac{1}{p}} + \Big[|\aggr_1(z)|^p + |\aggr_2(z)|^p\Big]^{\frac{1}{p}}\\
&\geq |z - \aggr_1(z)| + |\aggr_1(z)| + |\aggr_1(z)|\\
&\geq z + |\aggr_1(z)|,
\end{align*}
\end{small}
{\flushleft where the first inequality comes from the fact that the $L_p$ norm of each vector is at least as high as the absolute value of its first element, and the second inequality follows from the triangle inequality. Using the assumption that $\aggr_1(z) > 1$, we obtain
\[G_z(\aggr_1(z),\aggr_2(z)) > z+1 = G_z(0,0),\]
contradicting the fact that $(\aggr_1(z), \aggr_2(z))$ is optimal. Therefore, $\aggr_1(z) \leq 1$, completing the proof.}
\hfill$\blacksquare$

\medskip
\noindent\textbf{Proof of Claim~\ref{claim:ssd-limit}}\hspace{5px}
Take any arbitrary $f_1 \in [0,1]$ and $f_2 \in [0,1]$. Subtracting Equations~\eqref{eqn:ssd-opt3} and~\eqref{eqn:ssd-limit} we obtain
\begin{align}   \label{eqn:H-difference}
    H_z(f_1, f_2) - H^\star(f_1, f_2) = \Big[\big(z-f_1\big)^p + f_2^p\Big]^{\frac{1}{p}} - \big(z - f_1\big).
\end{align}
Observe that since $f_2 \geq 0$, the RHS of Equation~\eqref{eqn:H-difference} is non-negative. Hence, the equation does not change on using an absolute value, i.e.,
\begin{align}   \label{eqn:H-abs-difference}
|H_z(f_1, f_2) - H^\star(f_1, f_2)| = \Big[\big(z-f_1\big)^p + f_2^p\Big]^{\frac{1}{p}} - \big(z - f_1\big).
\end{align}
To prove the required result, we take a small detour and define $\phi(x) = \left(x^p + f_2^p\right)^{\frac{1}{p}} - x$. We show that $\phi(x) \to 0$ as $x \to \infty$. For this, rewrite $\phi(x)$ as follows
\begin{align*}
\phi(x) = x \left(1 + \frac{f_2^p}{x^p}\right)^{\frac{1}{p}} - x = \frac{\left(1 + \frac{f_2^p}{x^p}\right)^{\frac{1}{p}} - 1}{\frac{1}{x}}.
\end{align*}
Taking the limit of $x$ to infinity, we have
\begin{align}\label{eqn:phi-limit}
\lim_{x\to\infty}\phi(x) = \lim_{x\to\infty}\frac{\left(1 + \frac{f_2^p}{x^p}\right)^{\frac{1}{p}} - 1}{\frac{1}{x}}.
\end{align}
Observe that for both the numerator and denominator in the RHS of Equation~\eqref{eqn:phi-limit}, we have
$$\lim_{x \to \infty} \left\{\left(1 + \frac{f_2^p}{x^p}\right)^{\frac{1}{p}} - 1\right\} = 0
\quad \text{and} \quad
\lim_{x\to\infty}\left\{\frac{1}{x}\right\} = 0.$$
Hence, applying L'Hospital's rule on equation~\eqref{eqn:phi-limit} gives us
\begin{align*}
\lim_{x\to\infty}\phi(x) &= \lim_{x\to\infty}
\frac{-\frac{f_2^p}{x^{p+1}}\left(1 + \frac{f_2^p}{x^p}\right)^{\frac{1}{p}-1}}{-\frac{1}{x^2}}\\
&= \lim_{x\to\infty}
\left\{\frac{f_2^p}{x^{p-1}}\left(1 + \frac{f_2^p}{x^p}\right)^{\frac{1}{p}-1}\right\}\\
&= \left[\lim_{x\to\infty}
\frac{f_2^p}{x^{p-1}}\right] * \left[\lim_{x\to\infty}\left(1 + \frac{f_2^p}{x^p}\right)^{\frac{1}{p}-1}\right]\\
&= 0 * 1 = 0,
\end{align*}
where $\left[\lim_{x\to\infty} \frac{f_2^p}{x^{p-1}}\right] = 0$ because $p > 1$. Hence, we proved the required result, $\lim_{x \to \infty} \phi(x) = 0$. Going back to Equation~\eqref{eqn:H-abs-difference}, we rewrite it as
\begin{align*}
|H_z(f_1, f_2) - H^\star(f_1, f_2)| = \Big[\big(z-f_1\big)^p + f_2^p\Big]^{\frac{1}{p}} - \big(z - f_1\big) = \phi(z - f_1).
\end{align*}
Taking the limit of $z$ to infinity, we obtain
\begin{align}\label{eqn:diff-limit}
\lim_{z\to\infty} |H_z(f_1, f_2) - H^\star(f_1, f_2)| = \lim_{z\to\infty} \phi(z - f_1) = \lim_{t\to\infty} \phi(t) = 0,
\end{align}
where the second step follows by setting $t = z - f_1$. Equation~\eqref{eqn:diff-limit} implies that
$$\lim_{z\to\infty}  H_z(f_1, f_2) = H^\star(f_1, f_2).$$
\hfill$\blacksquare$

\medskip
\noindent\textbf{Proof of Claim~\ref{claim:limit-convexity}}\hspace{5px}
In the region $[0,1]^2$, using~\eqref{eqn:ssd-limit}, the function $H^\star$ can be written as
\begin{align}   \label{eqn:limit-rewritten}
H^\star(f_1, f_2) = -f_1 + \left\|(0,1) - (f_1, f_2)\right\|_p + \|(f_1, f_2)\|_p - 1.
\end{align}
Observe that each term on the RHS of~\eqref{eqn:limit-rewritten} is a convex function of $\vec f$. Hence, their sum is also convex in $\vec f$.

The proof of strict convexity closely follows the proof of claim~\ref{claim:strictly-convex}. Take arbitrary $\vec{f}, \vec{g} \in (0,1] \times [0,1]$ with $\vec{f} \neq \vec{g}$, and let $\theta \in (0,1)$. We show that $H^\star(\theta \vec{f} + (1-\theta)\vec{g}) < \theta H^\star(\vec{f}) + (1-\theta)H^\star(\vec{g})$. For this, we will first show that either (i) $[(0,1) - \vec{f}]$ is not parallel to $[(0,1) - \vec{g}]$ or (ii) $\vec{f}$ is not parallel to $\vec{g}$. For the sake of contradiction, assume that this is not true. That is, assume $[(0,1) - \vec{f}]$ is parallel to $[(0,1) - \vec{g}]$, and $\vec{f}$ is parallel to $\vec{g}$. This implies that
\[\begin{bmatrix} -f_1 \\ 1-f_2 \end{bmatrix} = r \begin{bmatrix} -g_1 \\ 1-g_2 \end{bmatrix} \quad \text{and} \quad \begin{bmatrix} f_1 \\ f_2 \end{bmatrix} = s \begin{bmatrix} g_1 \\ g_2 \end{bmatrix},\]
where $r,s \in \mathbb{R}$. Note that, neither $r$ nor $s$ can be $1$ because $\vec{f} \neq \vec{g}$. The first equation tells us that $f_1 = r g_1$ and the second one tells us that $f_1 = s g_1$. And since $g_1 \neq 0$, this implies that $r = s$. The second part of the second equation now tells us that $f_2 = s g_2 = r g_2$. The second part of the first equation becomes $1 - f_2 = r - rg_2$ which implies that $r=1$, leading to a contradiction. Therefore, at least one of (i) and (ii) is true.

Recall, $L_p$ norm with $p \in (1, \infty)$ is a convex norm, i.e. for any $x, y \in \mathbb{R}^2$,
\begin{align}   \label{eqn:convexity-again}
\|\theta x + (1-\theta)y\|_p \leq \theta \|x\|_p + (1-\theta)\|y\|_p.
\end{align}
And since $p \in (1, \infty)$, the inequality in~\eqref{eqn:convexity-again} is strict if $x$ is not parallel to $y$. For $H^\star$ (using Equation~\eqref{eqn:limit-rewritten}),
\begin{align}
H^\star(\theta \vec{f} + (1-\theta)\vec{g}) = &- \theta f_1 - (1-\theta) g_1 \notag\\
&+ \big\|\theta[(0,1) - \vec{f}] + (1-\theta)[(0,1) - \vec{g}]\big\|_p \notag\\
&+ \big\|\theta \vec{f} + (1-\theta) \vec{g}\big\|_p - 1. \label{eqn:H-convexity}
\end{align}
Because of convexity of the $L_p$ norm, both the third and fourth term on the RHS of Equation~\eqref{eqn:H-convexity} satisfy inequality~\eqref{eqn:convexity-again}. Further, because at least one of the pair of vectors in these two terms is not parallel (since either (i) or (ii) is true), at least one of them gives us a strict inequality. Therefore we obtain
\[H^\star(\theta \vec{f} + (1-\theta)\vec{g}) < \theta H^\star(\vec{f}) + (1-\theta)H^\star(\vec{g}).\]
\hfill$\blacksquare$

\medskip
\noindent\textbf{Proof of Claim~\ref{claim:limit-mins}}\hspace{5px}
To compute the minimizer of $H^\star$, we compute its gradients with respect to $f_1$ and $f_2$. Using Equation~\eqref{eqn:ssd-limit}, the partial derivative with respect to $f_1$ is
\begin{align}   \label{eqn:ssd-f1-der}
\frac{\partial H^\star}{\partial f_1} = -1 + f_1^{p-1} \Big[f_1^p + (1-f_2)^p\Big]^{\frac{1}{p}-1} + f_1^{p-1} \Big[f_1^p + f_2^p\Big]^{\frac{1}{p}-1}
\end{align}
and with respect to $f_2$ is
\begin{align}   \label{eqn:ssd-f2-der}
\frac{\partial H^\star}{\partial f_2} = 0 -(1-f_2)^{p-1}\Big[f_1^p + (1-f_2)^p\Big]^{\frac{1}{p}-1} +  f_2^{p-1} \Big[f_1^p + f_2^p\Big]^{\frac{1}{p}-1}.
\end{align}
Observe that at $f_2 = \frac{1}{2}$, irrespective of the value of $f_1$, the partial derivative~\eqref{eqn:ssd-f2-der} is
\begin{align*}
\frac{\partial H^\star}{\partial f_2}\bigg|_{f_2 = \frac{1}{2}} = -\frac{1}{2^{p-1}}\left[f_1^p + \frac{1}{2^p}\right]^{\frac{1}{p}-1} +  \frac{1}{2^{p-1}} \left[f_1^p + \frac{1}{2^p}\right]^{\frac{1}{p}-1} = 0.
\end{align*}
So, set $\infopt_2 = \frac{1}{2}$. Next, we find $\infopt_1$ such that the other derivative~\eqref{eqn:ssd-f1-der} is also zero at $\vec \infopt = (\infopt_1, \infopt_2)$. Setting~\eqref{eqn:ssd-f1-der} to zero at $\vec \infopt$, we obtain
\begin{align*}
\frac{\partial H^\star}{\partial f_1}\bigg|_{\vec f = \vec \infopt} = 0 &= -1 + \infopt_1^{p-1} \left[\infopt_1^p + \frac{1}{2^p}\right]^{\frac{1}{p}-1} + \infopt_1^{p-1} \left[\infopt_1^p + \frac{1}{2^p}\right]^{\frac{1}{p}-1}\\
\implies 1 &= 2\infopt_1^{p-1} \left[\infopt_1^p + \frac{1}{2^p}\right]^{\frac{1}{p}-1}\\
\implies \left[\infopt_1^p + \frac{1}{2^p}\right]^{1 - \frac{1}{p}} &= 2\infopt_1^{p-1}\\
\implies \left[\infopt_1^p + \frac{1}{2^p}\right]^{p - 1} &= 2^p \infopt_1^{p(p-1)}\\
\implies \infopt_1^p + \frac{1}{2^p} &= 2^{\frac{p}{p-1}} \infopt_1^{p}\\
\implies \frac{1}{2^p} &= \infopt_1^p\left(2^{\frac{p}{p-1}} - 1\right)\\
\therefore \infopt_1 &= \frac{1}{2} \left[\frac{1}{(2^{\frac{p}{p-1}}-1)}\right]^{\frac{1}{p}}.
\end{align*}
Hence, $\nabla_{\vec f} H^\star(\vec f) = \vec 0$ at $\vec \infopt$. And since $H^\star$ is convex in $[0,1]^2$ by Claim~\ref{claim:limit-convexity}, $\vec \infopt$ is the minimizer in this region.
\hfill$\blacksquare$

\medskip
\noindent\textbf{Proof of Claim~\ref{claim:ssd-inf-order}}\hspace{5px}
For any $p > 1$, we know
$$\frac{p}{p-1} > 1.$$
This implies that
\begin{align*}
    2^{\frac{p}{p-1}} - 1 > 1
    \quad \text{and hence} \quad
    \left[\frac{1}{2^{\frac{p}{p-1}} - 1}\right]^{\frac{1}{p}} < 1.
\end{align*}
Finally, using the values from Claim~\ref{claim:limit-mins}, we obtain
$$\infopt_1 < \infopt_2.$$
\hfill$\blacksquare$

\medskip
\noindent\textbf{Proof of Claim~\ref{claim:z-eps-order}}\hspace{5px}
Let $z \in \mathcal{Z}_\epsilon(1,1)$. Pick an arbitrary $(f_1, f_2) \in [0,1]^2$. As in the proof of Claim~\ref{claim:ssd-limit}, on subtracting Equations~\eqref{eqn:ssd-opt3} and~\eqref{eqn:ssd-limit}, and taking an absolute value, we obtain Equation~\eqref{eqn:H-abs-difference}, that is,
\begin{align}   \label{eqn:difference}
    |H_z(f_1, f_2) - H^\star(f_1, f_2)| = \Big[\big(z-f_1\big)^p + f_2^p\Big]^{\frac{1}{p}} - \big(z - f_1\big).
\end{align}
Combining Equation~\eqref{eqn:difference} with the fact that $0 \leq f_2 \leq 1$, we obtain
\begin{align}   \label{eqn:abs-difference}
|H_z(f_1, f_2) - H^\star(f_1, f_2)| \leq \Big[\big(z-f_1\big)^p + 1\Big]^{\frac{1}{p}} - \big(z - f_1\big).
\end{align}
Now, define $\psi(x) = \left(x^p + 1\right)^{\frac{1}{p}} - x$. We show that $\psi(x)$ is a non-increasing function for $x \geq 0$. Computing the derivative, we have
\begin{align*}
\frac{d \psi(x)}{d x} = x^{p-1}\left(x^p + 1\right)^{\frac{1}{p} - 1} - 1 = \left(\frac{x^p}{x^p + 1}\right)^{\frac{p-1}{p}} - 1 \leq 0 
\end{align*}
for $x \geq 0$, showing that it is a non-increasing function. Going back to Equation~\eqref{eqn:abs-difference}, we know that $f_1 \leq 1$. Therefore, $\big(z-f_1\big) \geq \big(z-1\big) \geq 0$. Using the fact that $\psi$ is a non-increasing function, we obtain $\psi\big(z-f_1\big) \leq \psi\big(z-1\big)$, which on expansion gives us
\begin{align}   \label{eqn:abs-less-than}
\Big[\big(z-f_1\big)^p + 1\Big]^{\frac{1}{p}} - \big(z - f_1\big) \leq \Big[\big(z-1\big)^p + 1\Big]^{\frac{1}{p}} - \big(z - 1\big)
= |H_z(1, 1) - H^\star(1, 1)|.
\end{align}
Combining Equations~\eqref{eqn:abs-difference} and~\eqref{eqn:abs-less-than}, and the fact that $z \in \mathcal{Z}_\epsilon(1,1)$, we obtain
$$|H_z(f_1, f_2) - H^\star(f_1, f_2)| \leq |H_z(1, 1) - H^\star(1, 1)| < \epsilon.$$
Hence, $z \in \mathcal{Z}_\epsilon(f_1, f_2)$.
\hfill$\blacksquare$

\medskip
\noindent\textbf{Proof of Claim~\ref{claim:fn-value-diff}}\hspace{5px}
The proof follows using three facts:
\begin{enumerate}
    \item Equation~\eqref{eqn:epsilon-band} for $\vec \aggr(z)$ says that $H^\star(\vec \aggr(z)) < H_z(\vec \aggr(z)) + \epsilon$.
    \item Because $\vec \aggr(z)$ is the minimizer of $H_z$, we have $H_z(\vec \aggr(z)) \leq H_z(\vec \infopt)$.
    \item For $\vec{\infopt}$, Equation~\eqref{eqn:epsilon-band} gives us $H_z(\vec \infopt) < H^\star(\vec \infopt) + \epsilon$.
\end{enumerate}
Putting these equations together:
$$H^\star(\vec \aggr(z)) < H_z(\vec \aggr(z)) + \epsilon \leq H_z(\vec \infopt) + \epsilon < H^\star(\vec \infopt) + 2\epsilon.$$
\hfill$\blacksquare$

\medskip
\noindent\textbf{Proof of Claim~\ref{claim:level-set-prop}}\hspace{5px}
We prove the claim by contraposition. Pick an arbitrary $\vec f \in [0,1]^2$ such that $\|\vec f - \vec \infopt\|_2 > \frac{\Delta}{4}$. This means that there exists $\vec g \in [0,1]^2$ on the line joining $\vec f$ and $\vec \infopt$ such that $\|\vec g - \vec \infopt\|_2 = \frac{\Delta}{4}$. We could alternatively write $\vec g = \theta \vec f + (1-\theta)\vec \infopt$, where $\theta \in (0,1)$. By convexity of $H^\star$,
\begin{align}   \label{eqn:convexity-middle-point}
    H^\star(\vec g) \leq \theta H^\star(\vec f) + (1-\theta) H^\star(\vec \infopt).
\end{align}
By definition of $\tau$ in~\eqref{eqn:tau-def}, we know $H^\star(\vec g) \geq \tau$. Also, we know $H^\star(\vec \infopt) < \tau$. Using these in~\eqref{eqn:convexity-middle-point}, we obtain
\begin{align*}
    \tau < \theta H^\star(\vec f) + (1-\theta) \tau.
\end{align*}
Therefore, we obtain $H^\star(\vec f) > \tau$. In summary, if $\|
\vec f - \vec \infopt\|_2 > \frac{\Delta}{4}$, then $H^\star(\vec f) > \tau$. Taking the contrapositive gives us the desired result.
\hfill$\blacksquare$

\subsection{Proof of Lemma~\ref{lem:sp-negative}}
\label{app:sp-negative}

Consider \Lpq{p}{q} aggregation with arbitrary $q \in (1, \infty]$. We show that strategyproofness is violated. The construction for this is as follows. Suppose there is one paper $a$ and two reviewers. The first reviewer gives the paper an overall recommendation of $1$ and the second reviewer gives it an overall recommendation of $0$. Let $\crit_a$ be the (objective) criteria scores of this paper. 

Let us first consider $q \in (1,\infty)$. For a function $f:\domX \to \domY$, all we care about in this example is its value at $\crit_a$. Hence, for simplicity, let $f_a$ denote the value of function $f$ at $\crit_a$, i.e, $f_a := f(\crit_a)$.
Then our aggregation becomes
$$\aggr_a = \underset{f_a \in \mathbb{R}}{\text{argmin}}\ \Big\{|1 - f_a|^q + |f_a|^q\Big\}.$$

We claim that $f_a=0.5$ is the unique minimizer. Observe that if $f_a = 0.5$, then the value of our objective is $0.5^q + 0.5^q < 1$ when $q \in (1,\infty)$. On the other hand, if $f_a \geq 1$ or if $f_a \leq 0$ then the value of our objective is at least $1$. Hence $f_a \in (0,1)$. By symmetry, we can restrict attention to the range $[0.5,1)$ since if there is a minimizer in $(0,0.5)$ then there must also be a minimizer in $(0.5,1)$. Consequently, we rewrite the optimization problem as
\begin{align}   \label{eqn:sp-opt}
\aggr_a = \underset{f_a \in [0.5,1)}{\text{argmin}}\ \Big\{(1 - f_a)^q + f_a^q\Big\}.
\end{align}

Consider the function $h:[0.5,1] \to \mathbb{R}$ defined by $h(x) = x^q$. This function is strictly convex (the second derivative is strictly positive in the domain) whenever $q \in (1,\infty)$. Hence from the definition of strict convexity, we have
$$0.5 \big( (1 - f_a)^q + f_a^q \big) > \big( 0.5 (1-f_a + f_a) \big)^q = 0.5^q$$
whenever $f_a \in (0.5,1)$. Consequently, the objective value of~\eqref{eqn:sp-opt} is greater at $f_a \in (0.5,1)$ than at $f_a = 0.5$. We conclude that $\aggr_a = 0.5$ whenever $q \in (1,\infty)$.

When $q = \infty$, we equivalently write the optimization problem as $$\aggr_a = \underset{f_a \in \mathbb{R}}{\text{argmin}}\  \max\big(|1-f_a|, |f_a|\big).$$
This objective has a value of $0.5$ if $f_a = 0.5$ and strictly greater if $f_a \neq 0.5$. Hence, $\aggr_a = 0.5$ for $q = \infty$ as well.

The true overall recommendation of reviewer $1$ differs from the aggregate $\aggr_a$ by $0.5$ (in every $L_\ell$ norm). However, if reviewer $1$ reported an overall recommendation of $2$, then an argument identical to that above shows that the minimizer is $\widehat{g}_a = 1$. Reviewer $1$ has thus successfully brought down the difference between her own true overall recommendation and the aggregate $\widehat{g}_a$ to $0$. We conclude that strategyproofness is violated whenever $q \in (1,\infty]$. \hfill$\blacksquare$

\subsection{Proof of Lemma~\ref{lem:consensus}}
\label{app:consensus}

The construction showing that $L(\infty,1)$ aggregation violates consensus is as follows. Suppose there are two papers, two reviewers and both reviewers review both papers. Assume that the papers have objective criteria scores $\crit_1$ and $\crit_2$, and that neither of these scores is pointwise greater than or equal to the other. Let the overall recommendations of the reviewers for the papers be given by the matrix
\[\mathbf{y} = \begin{bmatrix}
0 & 1\\
2 & 1
\end{bmatrix},\]
where $y_{ia}$ denotes the overall recommendation of reviewer $i$ for paper $a$. Since both reviewers give the same overall recommendation of $1$ to paper $2$, any aggregation method that satisfies consensus must also give paper $2$ an aggregate score of $1$. We show that this is not the case under \Lpq{\infty}{1} aggregation.

Let $f_i$ denote the value of function $f$ on paper $i$, i.e. $f_i := f(\crit_i)$. And let $\aggr_i$ denote the aggregate score of paper $i$. Since we are minimizing \Lpq{\infty}{1} loss, the aggregate function satisfies:
\begin{align}	\label{eqn:consensus-opt}
(\aggr_1, \aggr_2) \in \underset{(f_1, f_2) \in \mathbb{R}^2}{\text{argmin}} \bigg\{\big\|(0,1) - (f_1, f_2)\big\|_\infty + \big\|(2,1) - (f_1, f_2)\big\|_\infty\bigg\}.
\end{align}
We do not have any monotonicity constraints in~\eqref{eqn:consensus-opt} because the two papers have incomparable criteria scores. Denote the objective function of~\eqref{eqn:consensus-opt} by $G(f_1, f_2)$. We can simplify this objective to
\begin{align}   \label{consensus-obj}
G(f_1, f_2) =
\max(|f_1|, |f_2-1|) + \max(|2-f_1|, |f_2-1|).
\end{align}
We claim that $(0.5, 0.5)$ is a minimizer of $G$. The objective function value at this point is
$$G(0.5,0.5) = \max(0.5, 0.5) + \max(1.5, 0.5) = 0.5 + 1.5 = 2.$$
For arbitrary $(f_1, f_2) \in \mathbb{R}^2$, we have
\begin{align*}
G(f_1, f_2) &=
\max(|f_1|, |f_2-1|) + \max(|2-f_1|, |f_2-1|)\\
&\geq |f_1| + |2-f_1|\\
&\geq 2 = G(0.5, 0.5),
\end{align*}
where the first inequality holds because the maximum of two elements is always larger than the first, and the second inequality holds by the triangle inequality. Therefore, $(0.5, 0.5)$ is a minimizer of $G$. The $L_2$ norm of this minimizer is $0.5 \sqrt{2} < 1$. On the other hand, any minimizer $(\aggr_1, \aggr_2)$ with $\aggr_2 = 1$ would have an $L_2$ norm of at least $1$. It follows that such a minimizer will not be selected. In other words, \Lpq{\infty}{1} aggregation would select a minimizer for which the aggregate score of paper $2$ is not $1$, violating consensus.\footnote{Observe that even if we used any $L_k$ norm with $k \in (1, \infty)$ for tie-breaking, the $L_k$ norm of $(0.5, 0.5)$ would be $0.5 \sqrt[k]{2} < 1$, while the $L_k$ norm of any minimizer $(\aggr_1, 1)$ would still be at least $1$, violating consensus.}  \hfill$\blacksquare$

\begin{figure}[h]
\centering
\begin{tikzpicture}[scale=2.0]
\filldraw[fill=blue!50!white, draw=black] (1,0) -- (2,1) -- (1,2) -- (0,1) -- cycle;
\draw[step=0.25cm,gray,very thin] (-0.4,-0.4) grid (2.2,2.2);
\draw[thick,->] (0,0) -- (2.2,0) node[anchor=north west] {};
\draw[thick,->] (0,0) -- (0,2.2) node[anchor=south east] {};
\foreach \x in {0,1,2}
    \draw (\x cm,1pt) -- (\x cm,-1pt) node[anchor=north] {$\x$};
\foreach \y in {0,1,2}
    \draw (1pt,\y cm) -- (-1pt,\y cm) node[anchor=east] {$\y$};
\end{tikzpicture}
\caption{The shaded region depicts the set of all minimizers of~\eqref{eqn:consensus-opt}. $f_1$ is on the x-axis and $f_2$ is on the y-axis.}
\label{fig:consensus-mins}
\end{figure}
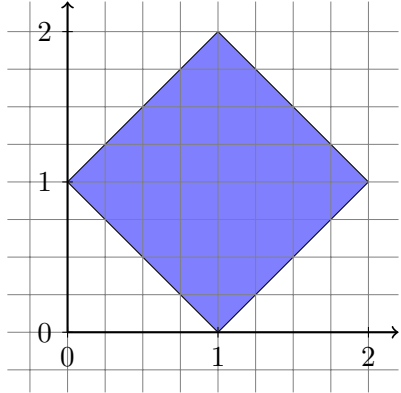

\paragraph{Complete picture of minimizers} For completeness, we look at the set of all minimizers of $G$. This is given by
$$\widehat{F} = \big\{(f_1, f_2) \ | \ f_1 \in [0,2], f_2 \in [1 - \min(f_1, 2 - f_1), 1 + \min(f_1, 2 - f_1)]\big\}.$$
Pictorially, this set is given by the shaded square in Figure~\ref{fig:consensus-mins}. It is the square with vertices at $(0,1)$, $(1,0)$, $(2,1)$ and $(1,2)$.

This shows that almost all minimizers violate consensus. For the specific tie-breaking considered, the minimizer chosen is the one with minimum $L_2$ norm, i.e., the projection of $(0,0)$ onto this square. This gives us $(0.5, 0.5)$, violating consensus.

Observe that tie-breaking using minimum $L_k$ norm, for $k \in (1, \infty]$, also chooses $(0.5, 0.5)$ as the aggregate function, violating consensus. For $k=1$, all points on the line segment $f_1 + f_2 = 1$ ($0 \leq f_1 \leq 1$) would be tied winners, almost all of which violate consensus. Further, even if one uses other reasonable tie-breaking schemes like maximum $L_k$ norm, they suffer from the same issue, i.e., there is a tied winner which violates consensus.

\section{Additional Empirical Results}
\label{app:empirical}

We present some more empirical results in addition to those provided in the main text.

\subsection{Influence of Varying the Hyperparameters}
\label{app:influence}

Although our theoretical results identify \Lpq{1}{1} aggregation as the most desirable, we would like to paint a broader picture by determining how much impact the choice of $p$ and $q$ actually has on selected papers. To this end, we compute the overlap between the papers selected by \Lpq{p}{q} aggregation, for $p,q \in \{1,2,3\}$ (although in general $p$ and $q$ need not be integral, they can be real as well as $\infty$). Table~\ref{tab:overlap-Lpq} shows the overlap between papers selected by \Lpq{p_1}{q_1} and \Lpq{p_2}{q_2}, where the rows represent $(p_1, q_1)$ and columns represent $(p_2, q_2)$. Note that the table is symmetric. 
The results suggest that $q$ has a more significant impact than $p$ on \Lpq{p}{q} aggregation. For instance, \Lpq{1}{1} behaves more similarly to \Lpq{2}{1} and \Lpq{3}{1} than to \Lpq{1}{2} and \Lpq{1}{3}.

\begin{table}[h!]
\centering
\begin{tabular}{|c|c|c|c|c|c|c|c|c|c|}
\hline
    & \textbf{1,1} & \textbf{1,2} & \textbf{1,3} & \textbf{2,1} & \textbf{2,2} & \textbf{2,3} & \textbf{3,1} & \textbf{3,2} & \textbf{3,3} \\ \hline
\textbf{1,1} & 100.0 & 87.5 & 82.7 & 96.1 & 88.0 & 82.6 & 92.3 & 87.5 & 82.1 \\ \hline
\textbf{1,2} & 87.5 & 100.0 & 94.5 & 88.3 & 94.9 & 93.1 & 87.7 & 94.6 & 92.3 \\ \hline
\textbf{1,3} & 82.7 & 94.5 & 100.0 & 84.0 & 92.1 & 95.2 & 83.5 & 91.8 & 94.0 \\ \hline
\textbf{2,1} & 96.1 & 88.3 & 84.0 & 100.0 & 89.8 & 84.4 & 95.7 & 89.5 & 84.0 \\ \hline
\textbf{2,2} & 88.0 & 94.9 & 92.1 & 89.8 & 100.0 & 94.1 & 89.8 & 98.8 & 93.7 \\ \hline
\textbf{2,3} & 82.6 & 93.1 & 95.2 & 84.4 & 94.1 & 100.0 & 84.4 & 94.1 & 98.6 \\ \hline
\textbf{3,1} & 92.3 & 87.7 & 83.5 & 95.7 & 89.8 & 84.4 & 100.0 & 89.7 & 84.0 \\ \hline
\textbf{3,2} & 87.5 & 94.6 & 91.8 & 89.5 & 98.8 & 94.1 & 89.7 & 100.0 & 93.8 \\ \hline
\textbf{3,3} & 82.1 & 92.3 & 94.0 & 84.0 & 93.7 & 98.6 & 84.0 & 93.8 & 100.0 \\ \hline
\end{tabular}\medskip
\caption{Percentage of overlap (in selected papers) between different \Lpq{p}{q} aggregation methods
\label{tab:overlap-Lpq}}
\end{table}

\subsection{Visualizing the Community Aggregate Mapping} 
\label{app:visual}

Our framework is not only useful for computing an aggregate mapping to help in acceptance decisions, but also for understanding the preferences of the community for use in subsequent modeling and research. We illustrate this application by providing some visualizations and interpretations of the aggregate function $\optfn$  obtained from \Lpq{1}{1} aggregation on the IJCAI review data. 

The function $\optfn$ lives in a $5$-dimensional space,  making it hard to visualize the entire aggregate function. Instead, we fix the values of $3$ criteria at a time and plot the function in terms of the remaining two criteria. In all of the visualization and interpretation below, the fixed criteria are set to their respective (marginal) modes: For `quality of writing' the mode is $7$ (715 reviews), for `originality' it is $6$ (826 reviews), for `relevance' it is $8$ (888 reviews), for `significance' it is $5$ (800 reviews), and for `technical quality' it is $6$ (702 reviews). 

The key takeaways from this experiment are as follows. First, writing and relevance do not have a significant influence (Figure~\ref{fig:visual_qualityrelevance}). Really bad writing or relevance is a significant downside, excellent writing or relevance is appreciated, but everything else in between in irrelevant. Second, technical quality and significance exert a high influence (Figure~\ref{fig:visual_significancetechnical}). Moreover, the influence is approximately linear. Third, linear models (i.e., models that are linear in the criteria) are quite popular in machine learning, and our empirical observations reveal that linear models are partially applicable to conference review data\,---\,for some criteria one may indeed assume a linear model, but not for all.

\begin{figure}[h!]
    \centering
    \subfigure[Varying `originality' and `technical quality']{
        \includegraphics[width=0.47\linewidth]{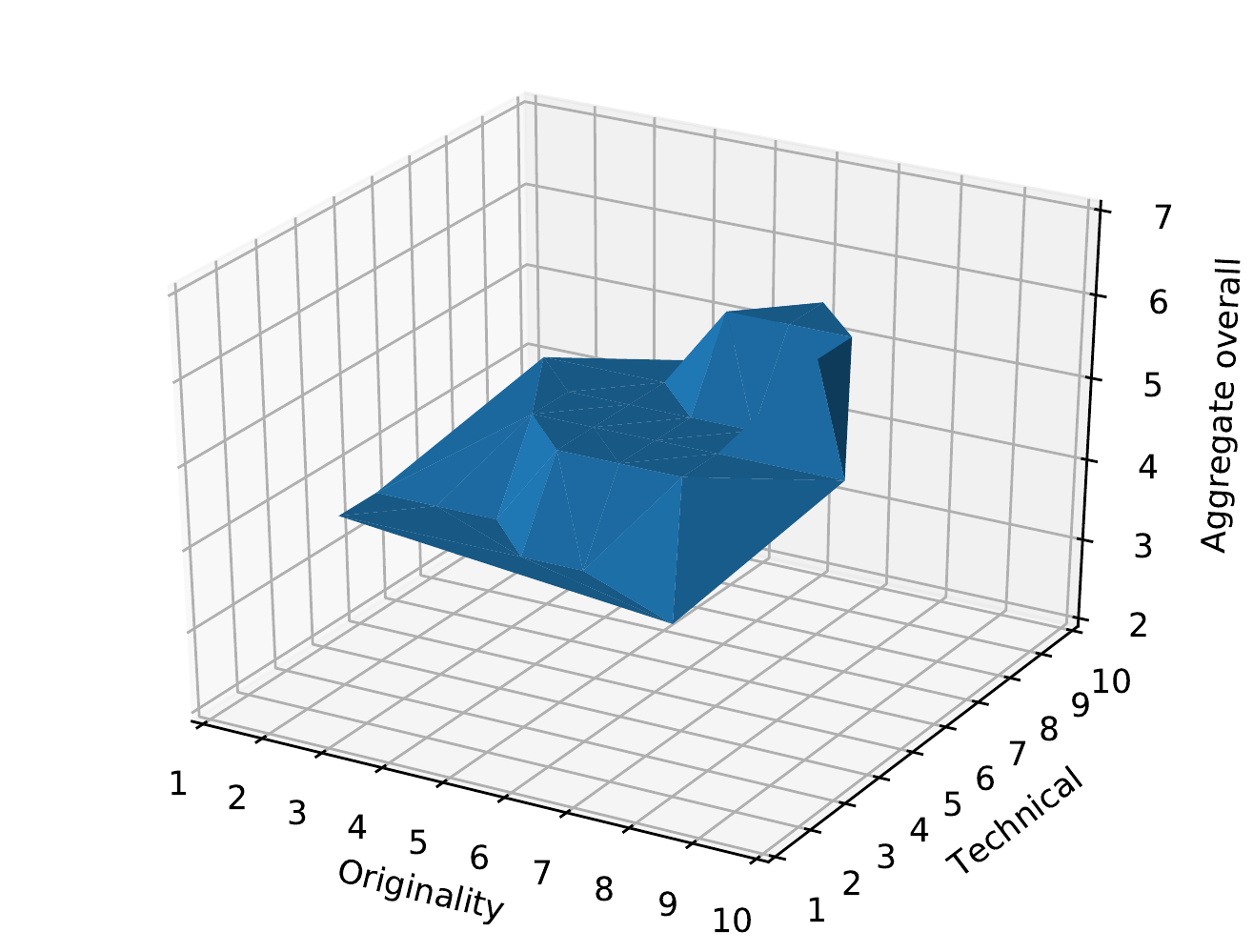} 
        \label{fig:visual6} 
    }
    \subfigure[Varying `originality' and `significance']{
        \includegraphics[width=0.47\linewidth]{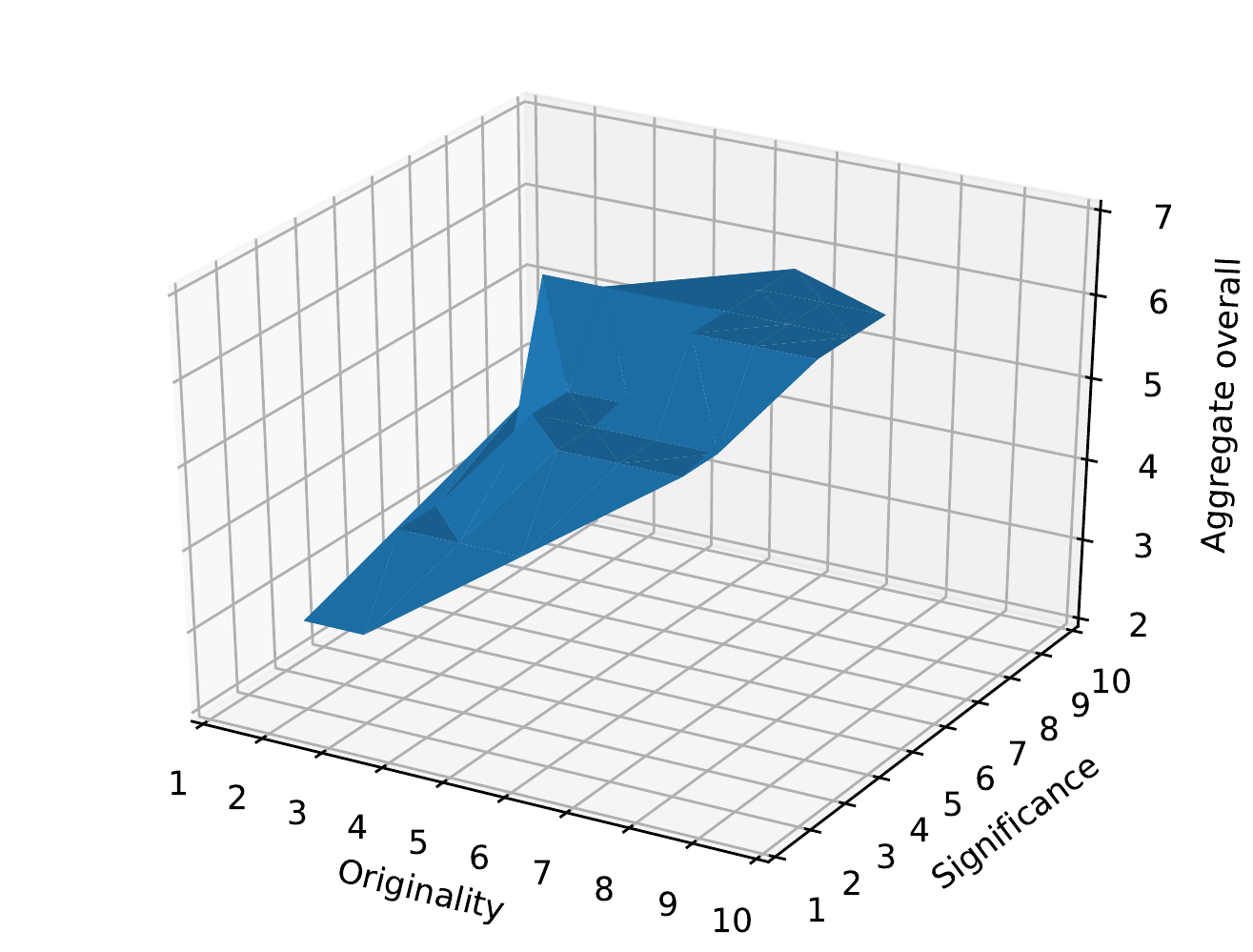} 
        \label{fig:visual7} 
    }
    \subfigure[Varying `quality of writing' and `relevance']
    {
        \includegraphics[width=0.48\columnwidth]{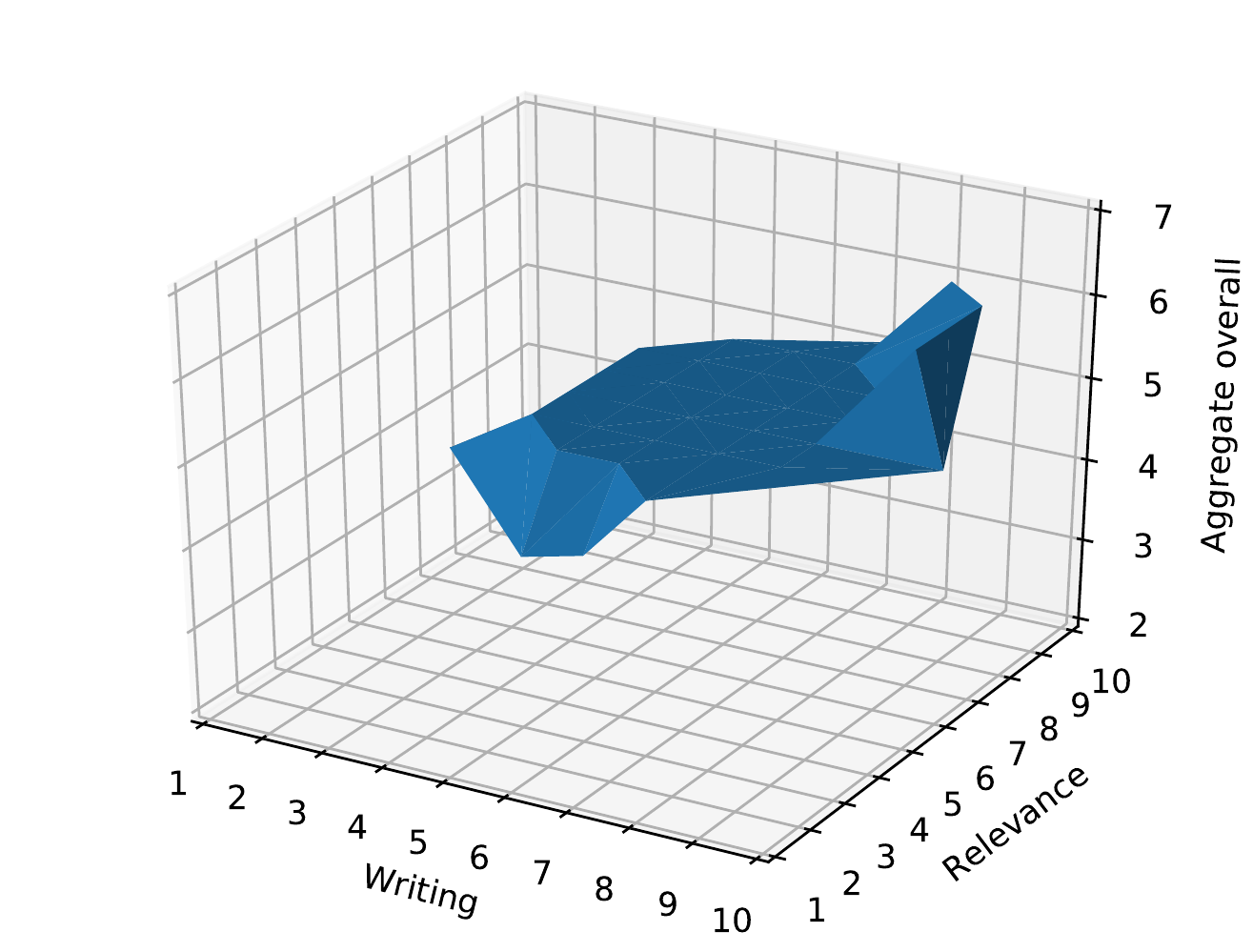} 
        \label{fig:visual_qualityrelevance}
    }
    \subfigure[Varying `significance' and `technical quality']
    {
        \includegraphics[width=0.48\columnwidth]{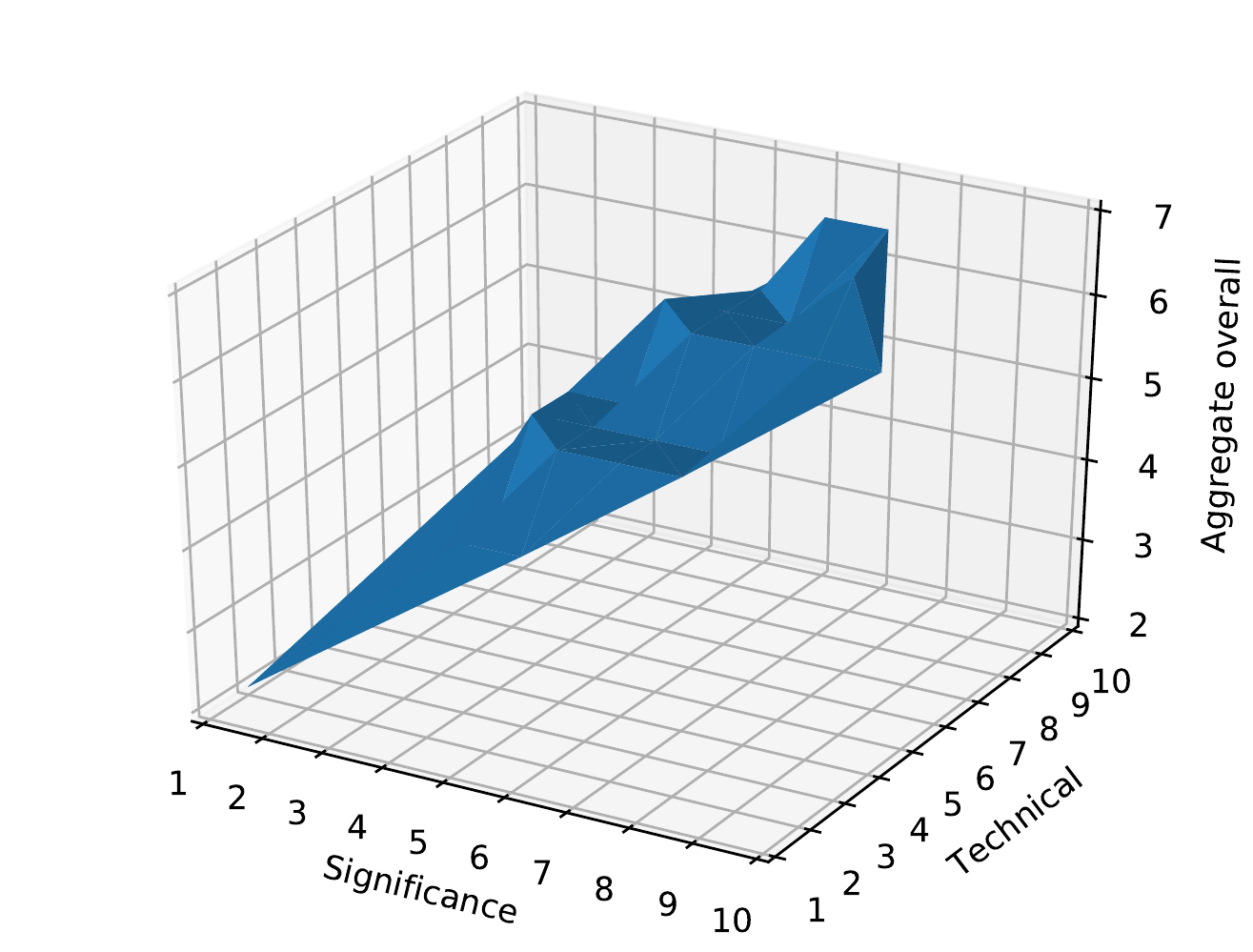} 
        \label{fig:visual_significancetechnical} 
    }
    \caption{Impact of varying different criteria under \Lpq{1}{1} aggregation}
    \label{fig:extras2}
\end{figure}

\begin{figure}[h!]
    \centering
    \subfigure[Varying `relevance' and `technical quality']{
        \includegraphics[width=0.47\linewidth]{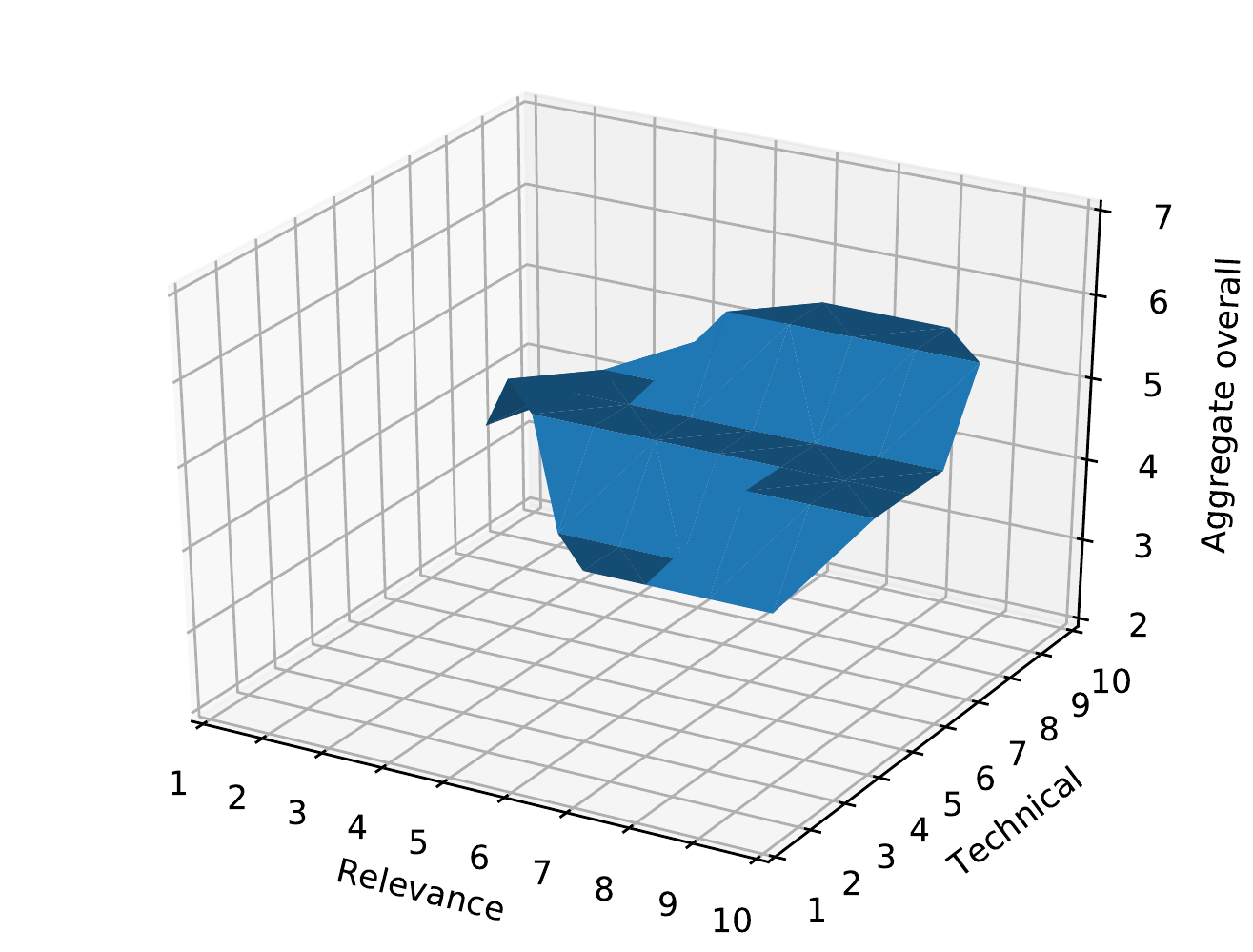} 
        \label{fig:visual4} 
    }
    \subfigure[Varying `relevance' and `significance']{
        \includegraphics[width=0.47\linewidth]{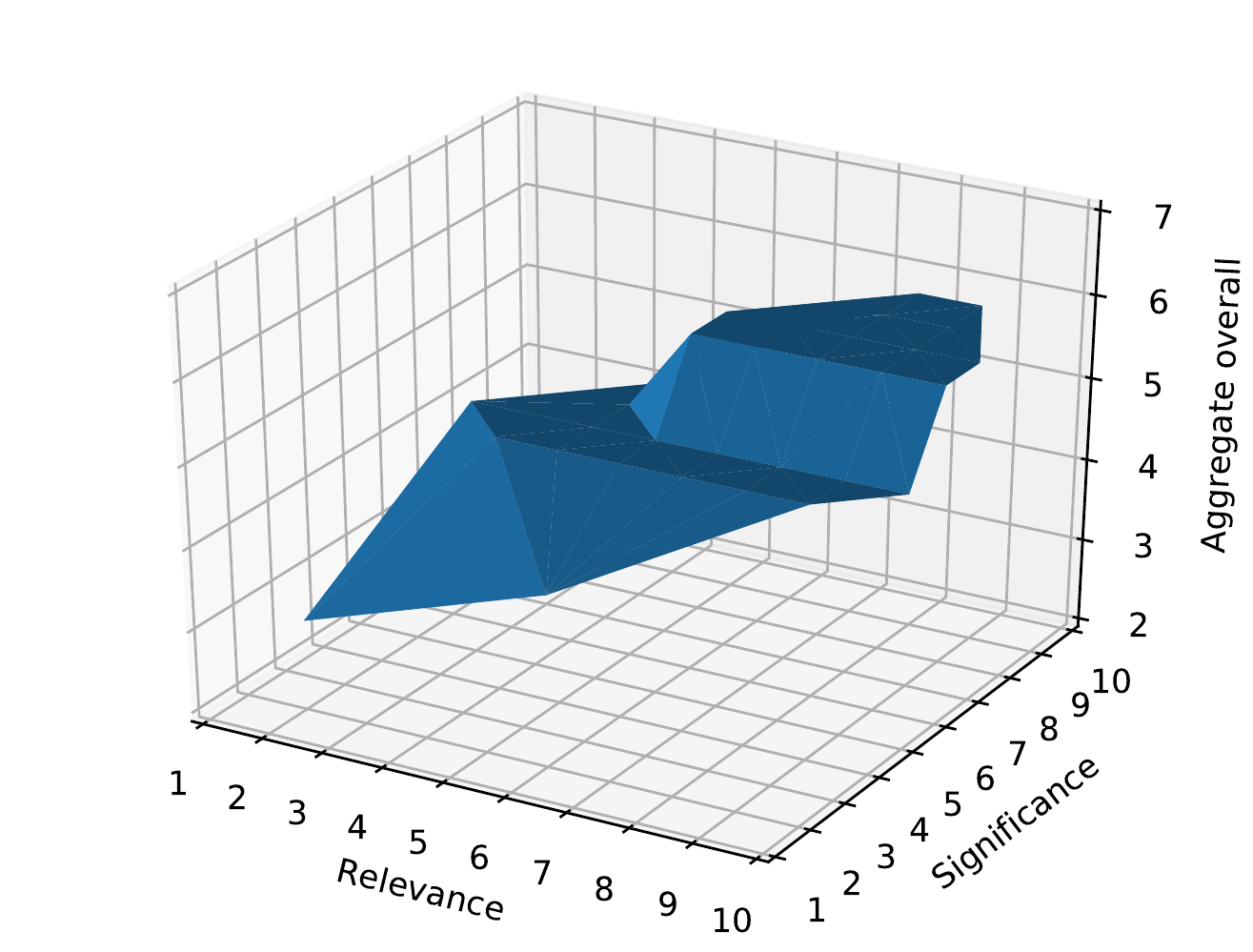} 
        \label{fig:visual5} 
    }
    \subfigure[Varying `quality of writing' and `significance']{
        \includegraphics[width=0.47\linewidth]{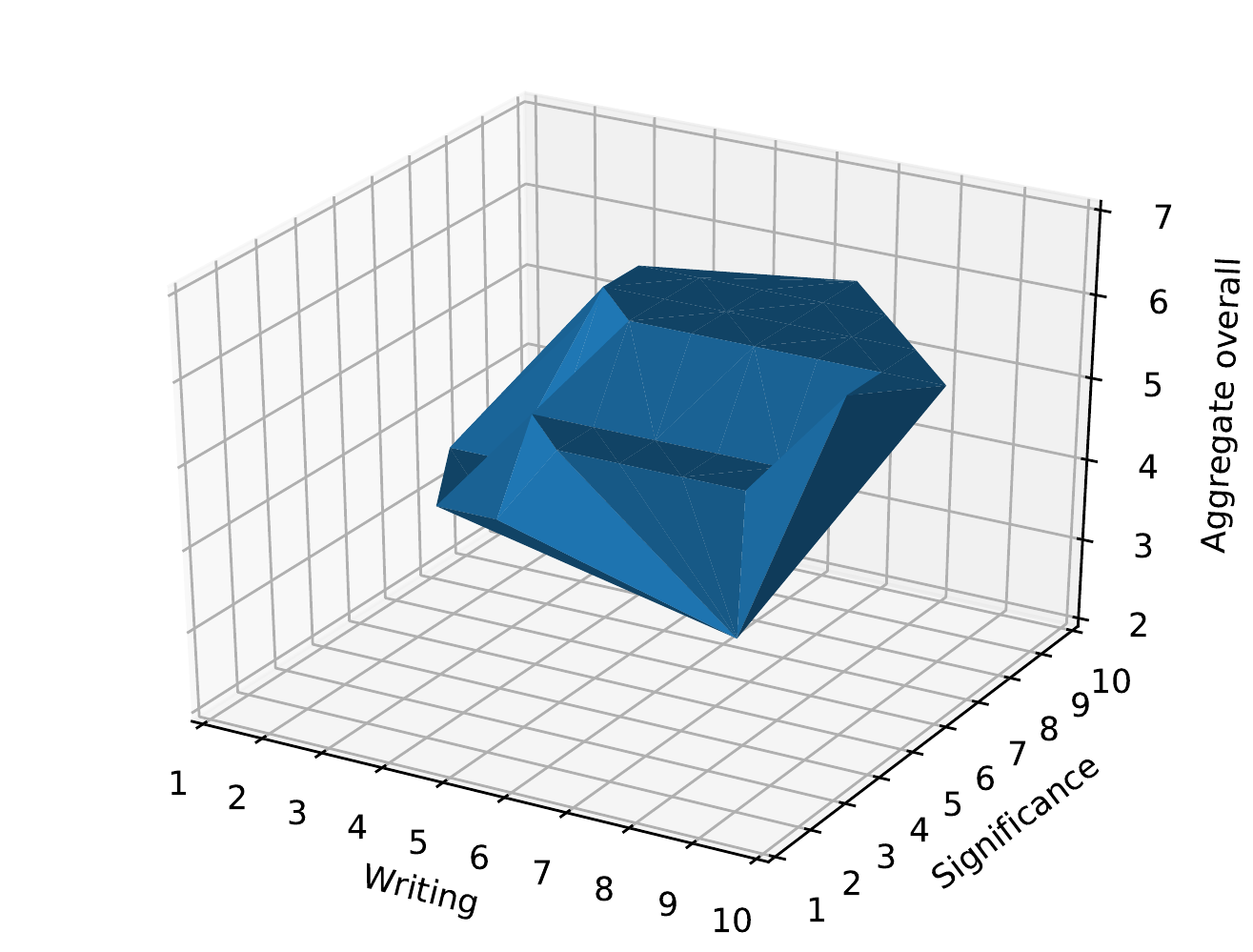} 
        \label{fig:visual8} 
    }
    \subfigure[Varying `quality of writing' and `originality']{
        \includegraphics[width=0.47\linewidth]{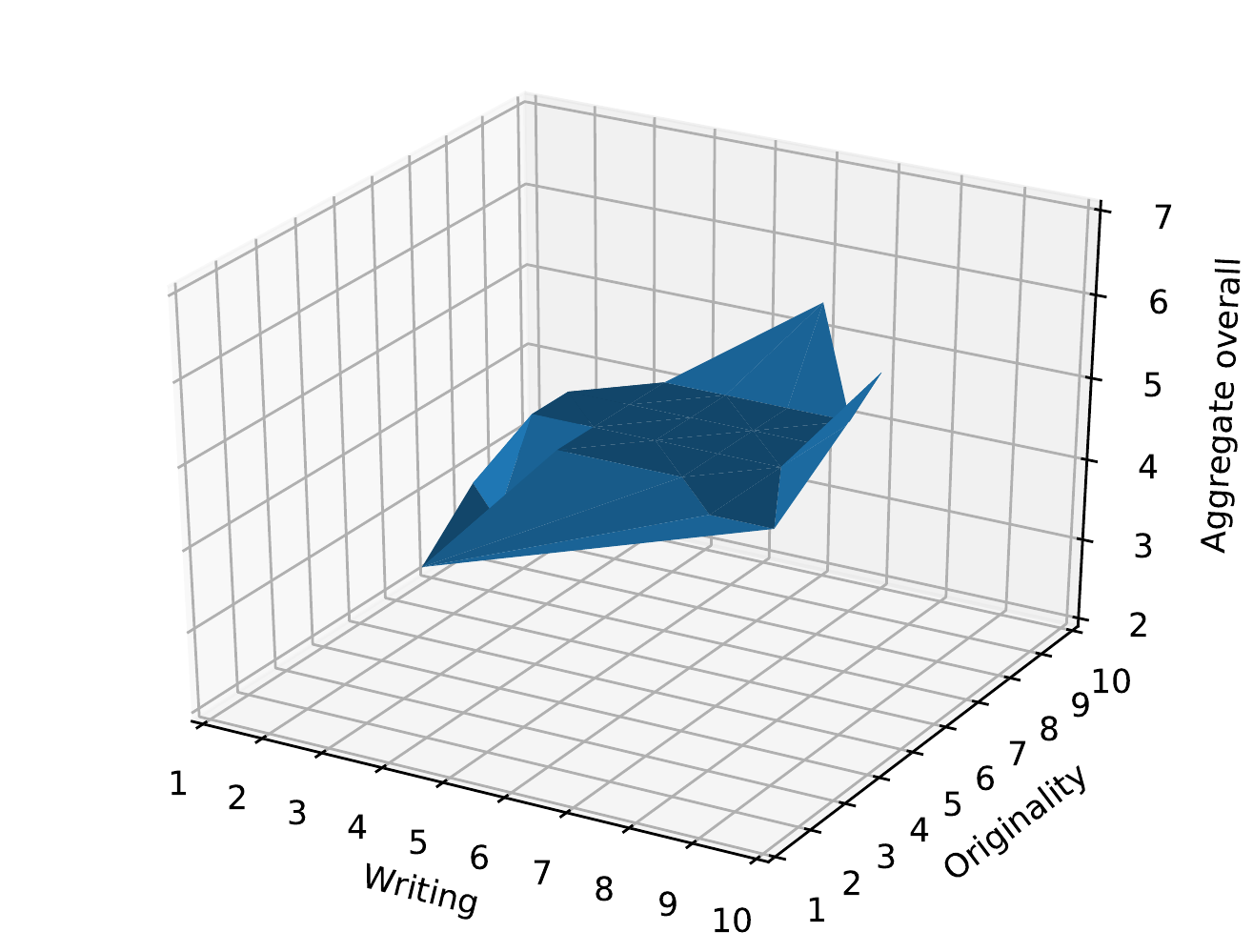} 
        \label{fig:visual10} 
    }
    \subfigure[Varying `quality of writing' and `technical quality']{
        \includegraphics[width=0.47\linewidth]{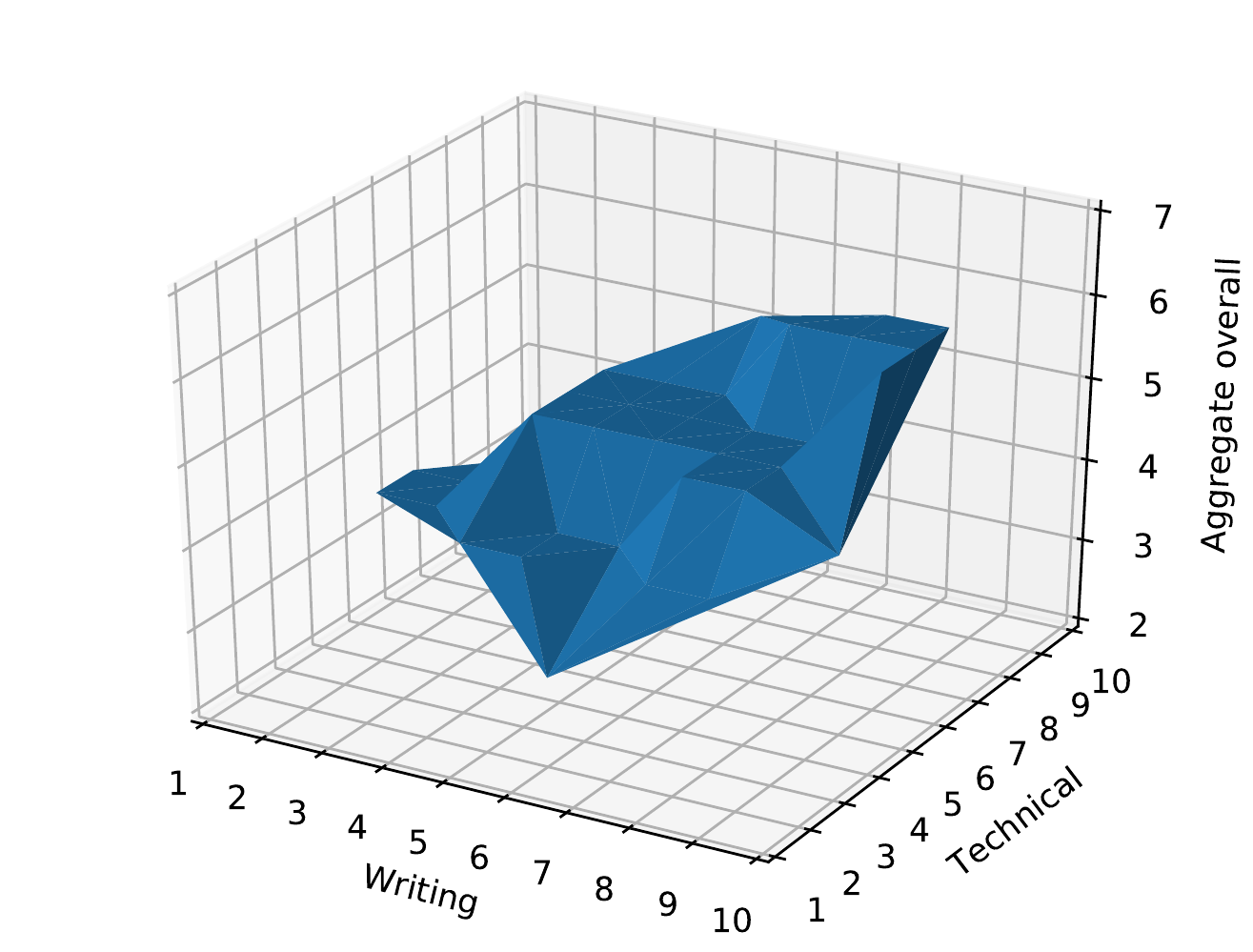} 
        \label{fig:visual1} 
    }
    \subfigure[Varying `originality' and `relevance']{
        \includegraphics[width=0.47\linewidth]{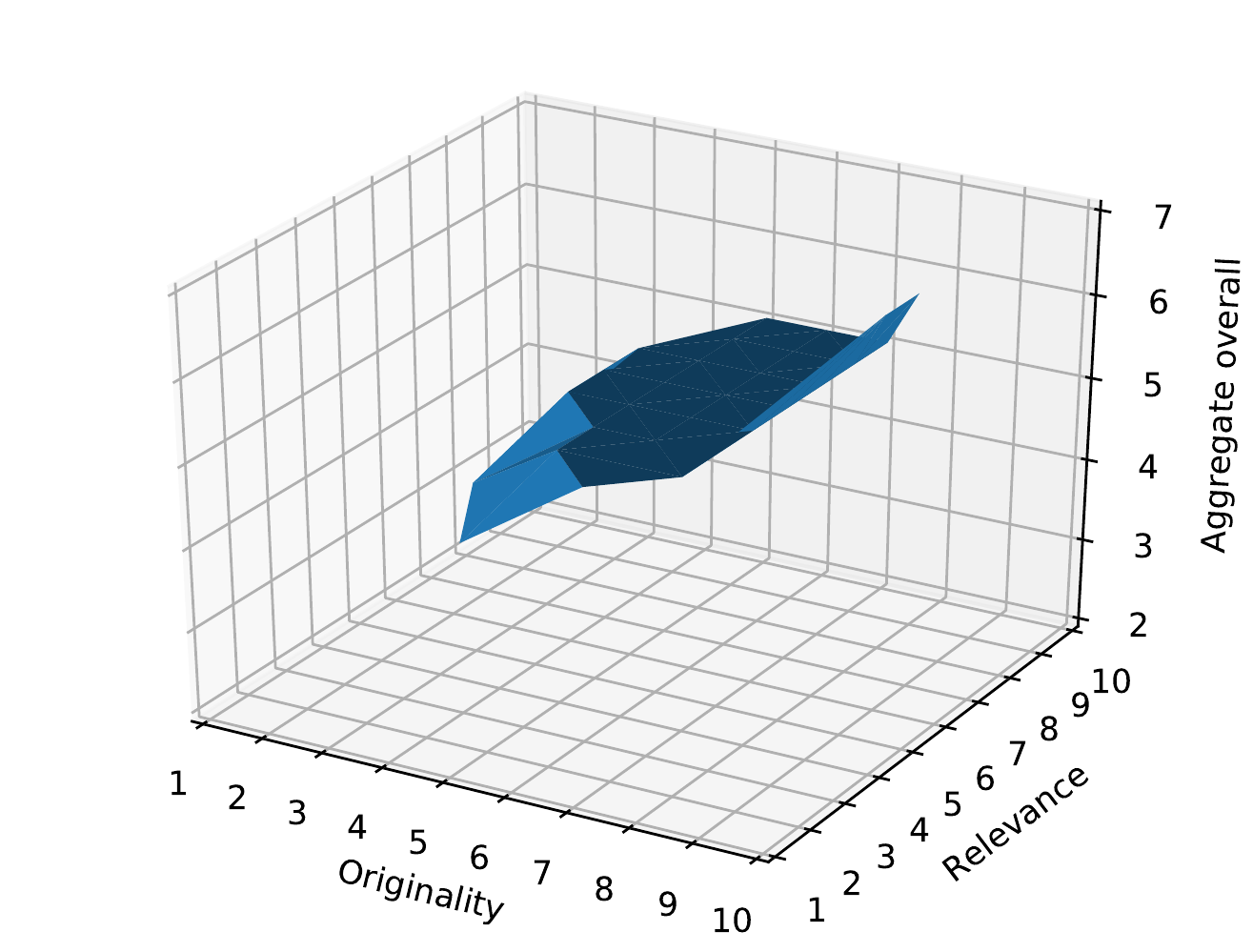} 
        \label{fig:visual2} 
    }
    \caption{Impact of varying different criteria under \Lpq{1}{1} aggregation (continued)}
    \label{fig:extras1}
\end{figure}

\end{document}